%File: anonymous-submission-latex-2024.tex
\documentclass[letterpaper]{article} % DO NOT CHANGE THIS
\usepackage{aaai24}  % DO NOT CHANGE THIS
\usepackage{times}  % DO NOT CHANGE THIS
\usepackage{helvet}  % DO NOT CHANGE THIS
\usepackage{courier}  % DO NOT CHANGE THIS
\usepackage[hyphens]{url}  % DO NOT CHANGE THIS
\usepackage{graphicx} % DO NOT CHANGE THIS
\urlstyle{rm} % DO NOT CHANGE THIS
  % DO NOT CHANGE THIS
\usepackage{natbib}  % DO NOT CHANGE THIS AND DO NOT ADD ANY OPTIONS TO IT
\usepackage{caption} % DO NOT CHANGE THIS AND DO NOT ADD ANY OPTIONS TO IT
\frenchspacing  % DO NOT CHANGE THIS
\setlength{\pdfpagewidth}{8.5in} % DO NOT CHANGE THIS
\setlength{\pdfpageheight}{11in} % DO NOT CHANGE THIS
%
% These are recommended to typeset algorithms but not required. See the subsubsection on algorithms. Remove them if you don't have algorithms in your paper.
\usepackage{algorithm}
\usepackage{algorithmic}

%
% These are are recommended to typeset listings but not required. See the subsubsection on listing. Remove this block if you don't have listings in your paper.
\usepackage{newfloat}
\usepackage{listings}
\DeclareCaptionStyle{ruled}{labelfont=normalfont,labelsep=colon,strut=off} % DO NOT CHANGE THIS
\lstset{%
	basicstyle={\footnotesize\ttfamily},% footnotesize acceptable for monospace
	numbers=left,numberstyle=\footnotesize,xleftmargin=2em,% show line numbers, remove this entire line if you don't want the numbers.
	aboveskip=0pt,belowskip=0pt,%
	showstringspaces=false,tabsize=2,breaklines=true}
\floatstyle{ruled}
\newfloat{listing}{tb}{lst}{}
\floatname{listing}{Listing}
%%%%%%%%%%%%%%%%%%%%%%%%%%%%%%%%%%%%%%%%%%%%%%%%%%%%%%%%%%%%%%%%%%%%%%%%
\usepackage{bm}
\usepackage{amsmath,amsfonts,amssymb,amsthm}
\usepackage{subfigure}
\urlstyle{same}
\usepackage{booktabs}
\usepackage{multirow}
\usepackage{multicol}
\usepackage{graphicx}
\usepackage{lipsum} 
\usepackage[utf8]{inputenc}
\usepackage{kotex}
\usepackage{siunitx}
\usepackage{rotating}
\usepackage{multirow}
\usepackage{mathabx}
 %%%%%%%%%%%%%%%%%%%%%%%%%%%%%%%%
\usepackage{pifont}% http://ctan.org/pkg/pifont
\newcommand{\cmark}{\ding{51}}%
\newcommand{\xmark}{\ding{55}}%
 %%%%%%%%%%%%%%%%%%%%%%%%%%%%%%%%
%%%%%%%%%%%%%%%%%%%%   CUSTOM COMMAND    %%%%%%%%%%%%%%%%%%%%%

%%%%%%%%%%%%%%%%%%%%   CUSTOM COMMAND    %%%%%%%%%%%%%%%%%%%%%
\theoremstyle{plain}
\newtheorem{theorem}{Theorem}
\newtheorem*{customthm}{Theorem 1}

\theoremstyle{definition}
\newtheorem{definition}[theorem]{Definition}

\theoremstyle{remark}

%%%%%%%%%%%%%%%%%%%%   CUSTOM COMMAND    %%%%%%%%%%%%%%%%%%%%%
% Todonotes is useful during development; simply uncomment the next line
   % and comment out the line below the next line to turn off comments
% \usepackage[disable,textsize=footnotesize]{todonotes}
\usepackage[textsize=small]{todonotes}
% http://tug.ctan.org/macros/latex/contrib/todonotes/todonotes.pdf
\usepackage{marginnote}
 
\setlength{\marginparwidth}{2cm}

% \newcommand{\inserttodo}[1]{\todo[inline, inlinewidth=5cm, noinlinepar]{#1}}

%%%%%%%%%%%%%%%%%%%%   CUSTOM COMMAND    %%%%%%%%%%%%%%%%%%%%%
%%%%%%%%%%%%%%%%%%%%%%%%%%%%%%%%%%%%%%%%%%%%%%%%%%%%%%%%%%%%%%%%%%%%%%%%%%%%%%%%%%%%%%%%%%
%
%
% Keep the \pdfinfo as shown here. There's no need
% for you to add the /Title and /Author tags.
\pdfinfo{
/TemplateVersion (2024.1)
}

\title{An Attentive Inductive Bias for Sequential Recommendation \\beyond the Self-Attention}
\author{
    %Authors
    % All authors must be in the same font size and format.
    Yehjin Shin\equalcontrib,
    Jeongwhan Choi\equalcontrib,
    Hyowon Wi,
    Noseong Park
    % \textsuperscript{\rm 1}\thanks{With help from the AAAI Publications Committee.},
}
\affiliations{
    Yonsei University, Seoul, South Korea
    %Afiliations
    % \textsuperscript{\rm 1}Association for the Advancement of Artificial Intelligence\\
    % If you have multiple authors and multiple affiliations
    % use superscripts in text and roman font to identify them.
    % For example,

    % Sunil Issar\textsuperscript{\rm 2},
    % J. Scott Penberthy\textsuperscript{\rm 3},
    % George Ferguson\textsuperscript{\rm 4},
    % Hans Guesgen\textsuperscript{\rm 5}
    % Note that the comma should be placed after the superscript

    % 1900 Embarcadero Road, Suite 101\\
    % Palo Alto, California 94303-3310 USA\\
    % email address must be in roman text type, not monospace or sans serif
    \{yehjin.shin, jeongwhan.choi, wihyowon, noseong\}@yonsei.ac.kr
%
% See more examples next
}

%Example, Single Author, ->> remove \iffalse,\fi and place them surrounding AAAI title to use it
\iffalse
\title{My Publication Title --- Single Author}
\author {
    Author Name
}
\affiliations{
    Affiliation\\
    Affiliation Line 2\\
    name@example.com
}
\fi

\iffalse
%Example, Multiple Authors, ->> remove \iffalse,\fi and place them surrounding AAAI title to use it
\title{My Publication Title --- Multiple Authors}
\author {
    % Authors
    First Author Name\textsuperscript{\rm 1},
    Second Author Name\textsuperscript{\rm 2},
    Third Author Name\textsuperscript{\rm 1}
}
\affiliations {
    % Affiliations
    \textsuperscript{\rm 1}Affiliation 1\\
    \textsuperscript{\rm 2}Affiliation 2\\
    firstAuthor@affiliation1.com, secondAuthor@affilation2.com, thirdAuthor@affiliation1.com
}
\fi

% REMOVE THIS: bibentry
% This is only needed to show inline citations in the guidelines document. You should not need it and can safely delete it.
\usepackage{bibentry}
% END REMOVE bibentry
 
\begin{document}

\maketitle

\begin{abstract}
Sequential recommendation (SR) models based on Transformers have achieved remarkable successes. The self-attention mechanism of Transformers for computer vision and natural language processing suffers from the oversmoothing problem, i.e., hidden representations becoming similar to tokens. In the SR domain, we, for the first time, show that the same problem occurs. We present pioneering investigations that reveal the low-pass filtering nature of self-attention in the SR, which causes oversmoothing. To this end, we propose a novel method called \textbf{B}eyond \textbf{S}elf-\textbf{A}ttention for Sequential \textbf{Rec}ommendation (BSARec), which leverages the Fourier transform to i) inject an inductive bias by considering fine-grained sequential patterns and ii) integrate low and high-frequency information to mitigate oversmoothing. Our discovery shows significant advancements in the SR domain and is expected to bridge the gap for existing Transformer-based SR models. We test our proposed approach through extensive experiments on 6 benchmark datasets. The experimental results demonstrate that our model outperforms 7 baseline methods in terms of recommendation performance. Our code is available at \url{https://github.com/yehjin-shin/BSARec}.
\end{abstract}

\section{Introduction}
Recommender systems play a vital role in web applications, delivering personalized item recommendations by analyzing user-item interactions~\cite{Rex2018pinsage,lee2018goccf,He20LightGCN,choi2021ltocf,kong2022hmlet,hong2022timekit,choi2023bspm,choi2023rdgcl,gao2023surveyrec}. As users' preferences evolve over time, capturing the temporal user behavior becomes essential. This is where SR steps in, attracting substantial research attention~\cite{hidasi2016gru4rec,wu2022surveyrec,gao2023surveyrec,tang2018caser,kang2018sasrec,chen2019behavior,schedl2018current,hansen2020contextual,jiang2016personalized,huang2018csan}. 

\begin{figure}[t]
    \centering
    \includegraphics[width=\columnwidth]{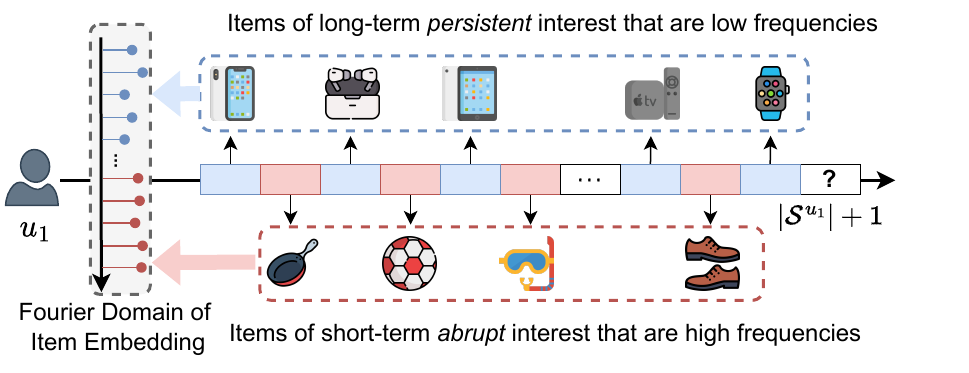}
    % \missingfigure[figwidth=0.7\columnwidth]{example}
    \caption{Illustration of high and low-frequency signals in SR. A user $u_1$'s long-term persisting interests and tastes constitute low frequencies in the Fourier domain of embedding, and abrupt short-term changes in $u_1$'s interests correspond to high frequencies.}
    % When recommending items for the next time ($|\mathcal{S}^{u_1}|+1$), it is easy to recommend long-term interests, but recommending short-term interests is a challenging task.
    \label{fig:example}
\end{figure}

\begin{table}[t]
    \centering
    \small
    \setlength{\tabcolsep}{1.4pt}
    \begin{tabular}{cccc}\toprule
        Method   & Inductive Bias & Self-Attention & High-pass Filter\\ \midrule
        SASRec   & \xmark & \cmark & \xmark \\
        BERT4Rec & \xmark & \cmark & \xmark \\
        FMLPRec & \cmark & \xmark & \xmark \\
        DuoRec   & \xmark & \cmark & \xmark \\
        \midrule
        BSARec   & \cmark & \cmark & \cmark \\
        \bottomrule
    \end{tabular}
    \caption{Comparison of existing Transformer-based methods that differ at three points: i) using inductive bias, ii) using self-attentions, and iii) using high-pass filters}
    \label{tab:intro}
\end{table}

With the increasing popularity of sequential recommendation (SR) systems, Transformer-based models, especially those utilizing self-attention~\cite{vaswani2017attention}, have emerged as dominant approaches for providing accurate and personalized recommendations to users~\cite{kang2018sasrec,sun2019bert4rec,li2020tisrec,wu2020sse,wu2020deja}. However, despite their successes in the SR, Transformer-based models possess inherent limitations that confine themselves to the learned self-attention matrix. The following two key limitations need to be addressed: i) First, the models may still suffer from suboptimal performance due to insufficient \emph{inductive bias} inherent in processing sequences with self-attention~\cite{dosovitskiy2020image}. While the self-attention mechanism captures long-range dependencies, it may not only adequately consider certain fine-grained sequential patterns but also be overfitted to training data, leading to potential weak generalization capabilities. As Table~\ref{tab:intro} shows, SASRec~\cite{kang2018sasrec}, BERT4Rec~\cite{sun2019bert4rec}, and DuoRec~\cite{qiu2022duorec} rely on training the self-attention layer and lack inductive bias\footnote{We mean by inductive bias a pre-determined attention structure that is not trained but injected by us when designing our model. Therefore, we call it as \emph{attentive inductive bias}.}.
ii) The second limitation pertains to the low-pass filtering nature of self-attention. By focusing on the entire range of data, self-attention may unintentionally smoothen out important and detailed patterns in embedding, resulting in the \emph{oversmoothing} problem. The oversmoothing issue poses a significant challenge in the SR domain, as it may hinder the ability of model to capture crucial temporal dynamics and provide accurate predictions. In Table~\ref{tab:intro}, most Transformer-based models are limited to low-pass filters. These models do not consider high-pass filters. Note that FMLPRec~\cite{zhou2022fmlprec} attempts to learn a filter, but it tends to gravitate towards the low-pass filter (cf. Fig.~\ref{fig:lpf} (b)). As shown in Fig.~\ref{fig:example}, the low-pass filter only captures the ongoing preferences of the user, i.e., an Apple fanatic, and it may be difficult to capture preferences based on new interests or trends (e.g., snorkel mask to buy for vacation). When recommending items for the next time ($|\mathcal{S}^{u_1}|+1$), it is undemanding to recommend long-term interests, but recommending short-term interests is a challenging task.

In this paper, we address these two limitations and present \textbf{B}eyond \textbf{S}elf-\textbf{A}ttention for Sequential \textbf{Rec}ommendation (BSARec), a novel model that uses inductive bias via Fourier transform with self-attention. By using the Fourier transform, BSARec gains access to the inductive bias of frequency information, enabling the capture of essential patterns and periodicity that may be overlooked by self-attention alone. This enhances inductive bias and has the potential to improve recommendation performance.

To tackle the oversmoothing issue, we introduce our own designed frequency rescaler to apply high-pass filters into BSARec's architecture. Our frequency rescaler can capture high-frequency behavioral patterns, such as interests driven by short-term trends, as well as low-frequency patterns, such as long-term interests, in a user's behavioral patterns (cf. Fig.~\ref{fig:example}). Additionally, our method provides a perspective to improve the performance of SR models and solve the problem of oversmoothing.

To evaluate the efficacy of BSARec, we conduct extensive experiments on 6 benchmark datasets. Our experimental results demonstrate that BSARec consistently outperforms 7 baseline methods regarding recommendation performance. Additionally, we conduct a series of experiments that underscore the necessity of our approach and verify its effectiveness in mitigating the oversmoothing problem, leading to improved recommendation accuracy and enhanced generalization capabilities. The contributions of this work are as follows:
\begin{itemize}
    \item We unveil the low-pass filtering nature of the self-attention of Transformer-based SR models, resulting in the problem of oversmoothing.
    \item We propose a novel model, \textbf{B}eyond \textbf{S}elf-\textbf{A}ttention for Sequential \textbf{Rec}ommendation (BSARec), that leverages the Fourier transform to balance between our inductive bias and self-attention. Further, we design the rescaler for high-pass filters to mitigate the oversmoothing issue.
    \item Extensive evaluation on 6 benchmark datasets demonstrates BSARec's outperformance over 7 baseline methods, validating its effectiveness in improving recommendation performance.
\end{itemize}

\section{Preliminaries}
\subsection{Problem Formulation}
The goal of SR is to predict the user’s next interaction with an item given their historical interaction sequences. Given a set of users $\mathcal{U}$ and items $\mathcal{V}$, we can sort the interacted items of each user $u \in \mathcal{U}$ chronologically in a sequence as $\mathcal{S}^{u}  = [v^{u}_1, v^{u}_2, \ldots v^{u}_{|\mathcal{S}^{u}|}]$, where $v^{u}_i$ denotes the $i$-th interacted item in the sequence. The aim is to recommend a Top-$k$ list of items as potential next items in a sequence. Formally, we predict $p(v^{u}_{|\mathcal{S}^{u}|+1}=v|\mathcal{S}^{u})$.

\subsection{Self-Attention for Sequential Recommendation}
The basic idea behind the self-attention mechanism is that elements within sequences are correlated but hold varying levels of significance concerning their positions in the sequence. Self-attention uses dot-products between items in the sequence to infer their correlations defined as: 
\begin{align}
    \mathbf{A} = \textrm{softmax}\left( \frac{\mathbf{Q}\mathbf{K}^{\mathtt{T}}}{\sqrt{d}} \right),
\end{align}where $\mathbf{Q}=\mathbf{E}_{\mathcal{S}^{u}}\mathbf{W}_Q$, $\mathbf{K}=\mathbf{E}_{\mathcal{S}^{u}}\mathbf{W}_K$, and $d$ is the scale factor. The scaled dot-product component learns the latent correlation between items. Other components in Transformer are utilized in SASRec, including the point-wise feed-forward network, residual connection, and layer normalization. Our method uses this self-attention matrix and adds an inductive bias to find the trade-off between the two methods.

\subsection{Discrete vs. Graph Fourier Transform} \label{sec:graph_fourier}
This subsection introduces the concept of the frequency domain and the Fourier transform, providing a cohesive foundation for the proposed method. 

The Discrete Fourier Transform (DFT) is a linchpin in digital signal processing (DSP), projecting a sequence of values into the frequency domain (or the Fourier domain). We typically use $\mathcal{F}: \mathbb{R}^N\rightarrow\mathbb{C}^{N}$ to denote DFT with the Inverse DFT (IDFT) $\mathcal{F}^{-1}: \mathbb{C}^{N} \rightarrow \mathbb{R}^N$. Applying $\mathcal{F}$ to a signal is equal to multiplying it from the left by a DFT matrix. The rows of this matrix consist of the Fourier basis $\bm{f}_j = [e^{2\pi i(j-1)\cdot0} \ldots e^{2\pi i(j-1)(N-1)}]^{\mathtt{T}}/\sqrt{N}\in\mathbb{R}^N$, where $i$ is the imaginary unit and $j$ denotes the $j$-th row. For the spectrum of $\bm{x}$, let it be represented as $\widebar{\bm{x}} = \mathcal{F}\bm{x}$. We can define $\widebar{\bm{x}}_{\text{lfc}}\in\mathbb{C}^c$ containing the $c$ lowest elements of $\widebar{\bm{x}}$, and $\widebar{\bm{x}}_{\text{hfc}} \in \mathbb{C}^{N-c} $ as the vector containing the remaining elements.
The low-frequency components (LFC) of the sequence signal $\bm{x}$ are defined as:
\begin{align}\label{eq:lfc}
\text{LFC}[\bm{x}] = \left[\bm{f}_1, \bm{f}_2, \ldots, \bm{f}_c\right]\widebar{\bm{x}}_{\text{lfc}} \in \mathbb{R}^N.
\end{align}
Conversely, the high-frequency components (HFC) are:
\begin{align}\label{eq:hfc}
\text{HFC}[\bm{x}] = \left[\bm{f}_{c+1}, \bm{f}_{c+2}, \ldots, \bm{f}_{N}\right] \widebar{\bm{x}}_{\text{hfc}} \in \mathbb{R}^N.
\end{align}
Note that we use real-valued DFT and multiplying with the Fourier bases in Eqs.~\ref{eq:lfc} and~\ref{eq:hfc} means IDFT. For more descriptions, interested readers should refer to Appendix.

The Graph Fourier Transform (GFT) can be considered as a generalization of DFT toward graphs. In other words, DFT is a special case of GFT, where a ring graph of $N$ nodes is used (see Fig.~\ref{fig:lpf} (a))~\cite{sandryhaila2014discrete}. In fact, DFT is a method to project a sequence of values onto the eigenspace of the Laplacian matrix of the ring graph  (which is the same as the Fourier domain).

The frequency concept can also be described with the ring graph. The number of neighboring nodes with different signs on their signals corresponds to the frequency. Therefore, low-frequency information means a series of signals over $N$ nodes whose signs do not change often. In the case of the SR in our work, where $N$ nodes mean $N$ item embeddings, such low-frequency information means a long-standing interest of a user (see Fig.~\ref{fig:example}).

\begin{figure}[t]
    \centering
    % \subfigure[]{\missingfigure[figwidth=0.49\columnwidth]{}}
    \subfigure[A ring graph]{\includegraphics[width=0.39\columnwidth]{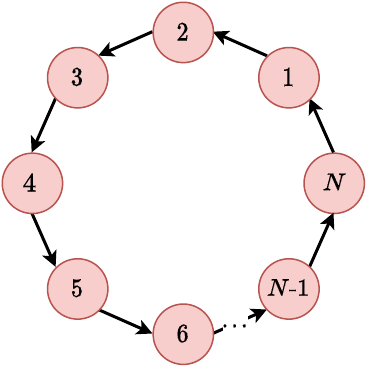}}
    \subfigure[{Spectral responses}]{\includegraphics[width=0.55\columnwidth]{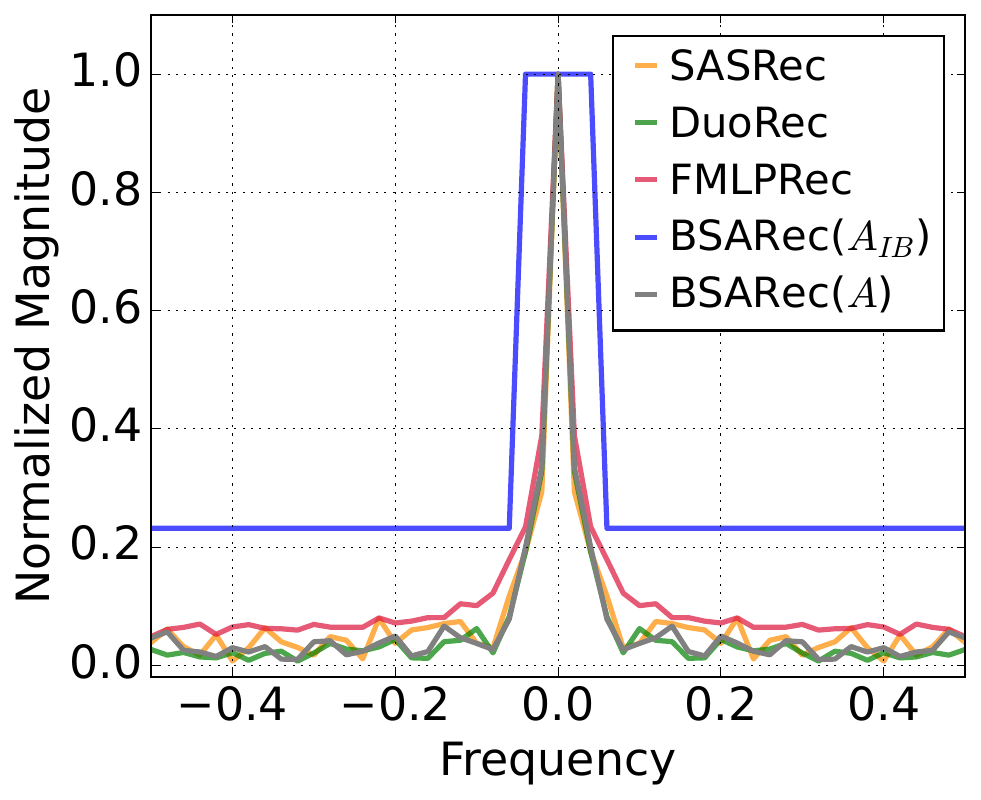}}
    \caption{(a) A ring graph with $N$ nodes, and (b) visualization of the filter of the self-attentions in LastFM.}
    \label{fig:lpf}
\end{figure}

\begin{figure}[t]
    \begin{subfigure}[{Singular value}]{\includegraphics[width=.49\columnwidth]{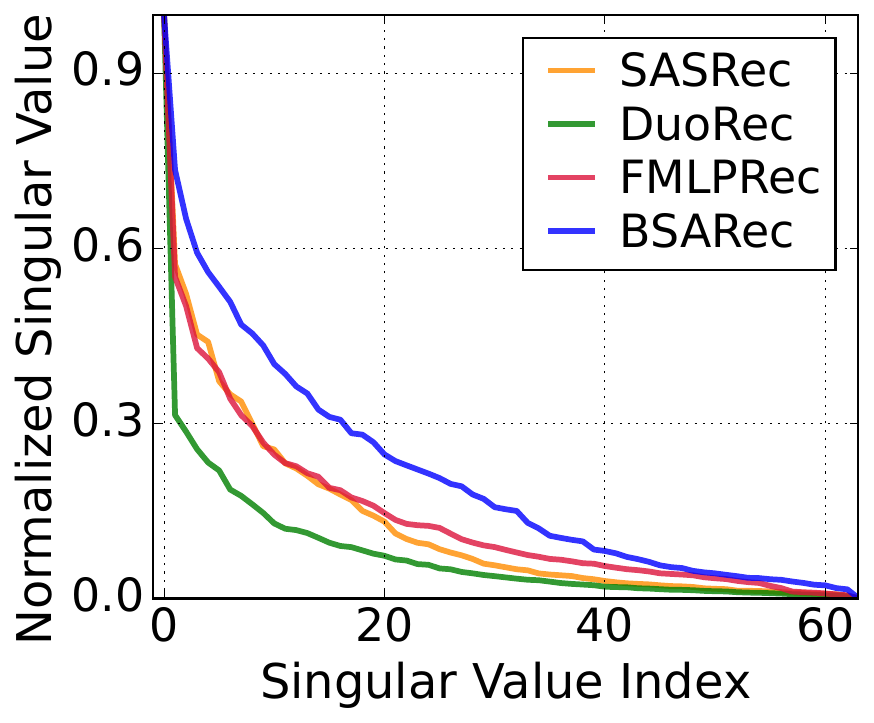}
        \label{fig:over_svd}}
    \end{subfigure}
    \hspace{-1em}
    \begin{subfigure}[Cosine similarity]{\includegraphics[width=.49\columnwidth]{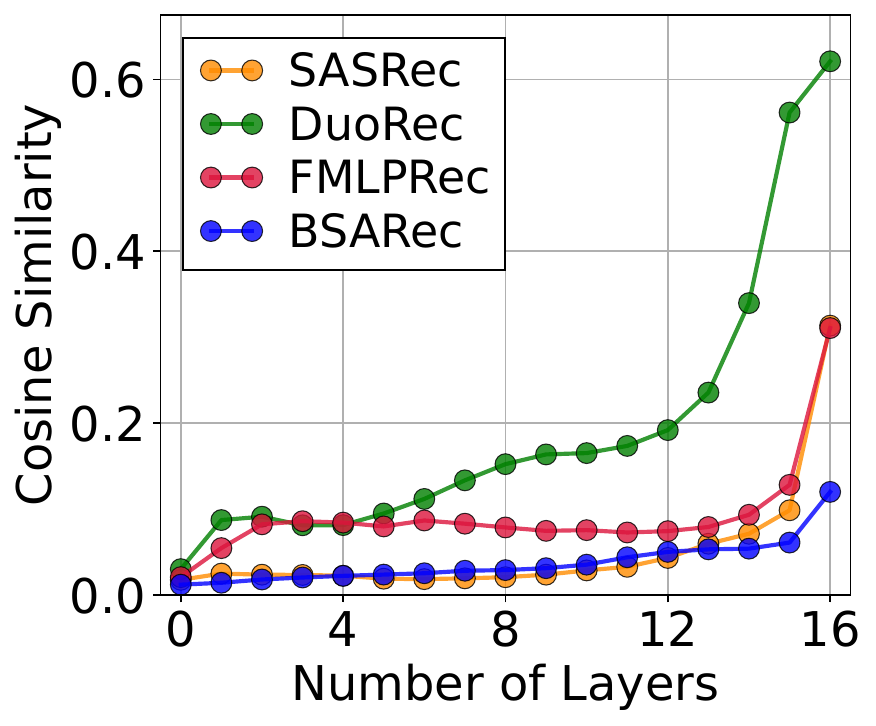}
        \label{fig:over_sim}}
    \end{subfigure}
    \caption{Visualization of oversmoothing in LastFM. The singular values and cosine similarity of user sequence output embedding.}
    \label{fig:over}
\end{figure}

\section{Motivation}
% \subsection{Self-Attention is a Low-Pass Filter}
In this section, we show that self-attention in the spectral domain is a low-pass filter that continuously erases high-frequency information.
We visualize the spectrum of self-attention of the Transformer-based sequential model as shown in Fig.~\ref{fig:lpf} (b). It shows that the spectrum is concentrated in the low frequency region, and it reveals that self-attention is a low-pass filter. We further make theoretical justifications for the low-pass filter of self-attention.

\begin{theorem}[Self-Attention is a low-pass filter]\label{theorem:lpf}
Let $\mathbf{A} =  \textrm{softmax}(\mathbf{Q}\mathbf{K}^{\mathtt{T}}/\sqrt{d} )$. Then 
$\mathbf{A}$ inherently acts as a low-pass filter. For all $\bm{x}\in\mathbb{R}^N$, in other words, $\lim_{t\rightarrow \infty} ||\text{HFC}[\mathbf{A}^t(\bm{x})]||_2 / ||\text{LFC}[\mathbf{A}^t(\bm{x})]||_2=0$.
\end{theorem}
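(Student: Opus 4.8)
The plan is to exploit the two structural consequences of the softmax in the definition of $\mathbf{A}$: every row sums to one and every entry is strictly positive. Hence $\mathbf{A}$ is a strictly positive, row-stochastic matrix, and the all-ones vector $\bm{1}=[1,\ldots,1]^{\mathtt{T}}$ satisfies $\mathbf{A}\bm{1}=\bm{1}$ since each row sum equals $1$. First I would invoke the Perron--Frobenius theorem for positive matrices: the spectral radius of $\mathbf{A}$ equals $1$, the eigenvalue $1$ is simple, its right eigenvector is (a scalar multiple of) $\bm{1}$, and every other eigenvalue $\lambda$ satisfies $|\lambda|<1$. Positivity makes $\mathbf{A}$ primitive, so the powers converge to a rank-one limit, $\mathbf{A}^t\to\bm{1}\bm{\pi}^{\mathtt{T}}$, where $\bm{\pi}$ is the left Perron eigenvector (the stationary distribution, normalized so $\bm{\pi}^{\mathtt{T}}\bm{1}=1$). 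Thus for any $\bm{x}\in\mathbb{R}^N$ we obtain $\mathbf{A}^t\bm{x}\to(\bm{\pi}^{\mathtt{T}}\bm{x})\,\bm{1}$.

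The second and conceptually central step is to recognize that the surviving direction $\bm{1}$ is exactly the lowest-frequency Fourier mode. From the construction in Section~\ref{sec:graph_fourier}, the first basis vector $\bm{f}_1=[1,\ldots,1]^{\mathtt{T}}/\sqrt{N}$ is precisely the zero-frequency (DC) component, so $\bm{1}=\sqrt{N}\,\bm{f}_1$ lies entirely inside the low-frequency subspace; that is, $\text{LFC}[\bm{1}]=\bm{1}$ and $\text{HFC}[\bm{1}]=\bm{0}$. Writing $\mathbf{A}^t\bm{x}=(\bm{\pi}^{\mathtt{T}}\bm{x})\,\bm{1}+\bm{r}_t$ with $\|\bm{r}_t\|_2\to0$ (the residual from the non-dominant eigenvectors collapses at the geometric rate fixed by the second-largest eigenvalue modulus), linearity of the transform yields $\text{HFC}[\mathbf{A}^t\bm{x}]=\text{HFC}[\bm{r}_t]\to\bm{0}$, while $\text{LFC}[\mathbf{A}^t\bm{x}]=(\bm{\pi}^{\mathtt{T}}\bm{x})\,\bm{1}+\text{LFC}[\bm{r}_t]\to(\bm{\pi}^{\mathtt{T}}\bm{x})\,\bm{1}$. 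Assembling the ratio, provided $\bm{\pi}^{\mathtt{T}}\bm{x}\neq0$ the denominator tends to a nonzero vector while the numerator tends to zero, so $\|\text{HFC}[\mathbf{A}^t\bm{x}]\|_2/\|\text{LFC}[\mathbf{A}^t\bm{x}]\|_2\to0$, which is the claim.

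The main obstacle I anticipate is twofold. First, $\mathbf{A}$ is in general not circulant, so the Fourier basis does \emph{not} diagonalize it; the proof cannot proceed by reading off a per-frequency spectral multiplier and must instead route entirely through the Perron eigenvector, using only the fortunate coincidence that the single surviving direction $\bm{1}$ equals $\sqrt{N}\,\bm{f}_1$. Second, the degenerate set $\bm{\pi}^{\mathtt{T}}\bm{x}=0$ makes both numerator and denominator vanish, so the leading-order limit $\mathbf{A}^t\bm{x}\to\bm{0}$ is insufficient; here one must compare decay rates across the remaining eigenvectors, arguing that the low-frequency content decays no faster than the high-frequency content, which requires a finer eigenvalue-ordered expansion rather than the bare limit. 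I would therefore treat the generic case as above and dispatch this measure-zero set by the rate-comparison refinement.
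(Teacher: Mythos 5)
Your route is essentially the paper's: both arguments rest on the Perron--Frobenius theorem applied to the positive row-stochastic matrix $\mathbf{A}$, isolate the dominant eigenvalue $\lambda_1=1$, and let the remaining spectral components decay under $\mathbf{A}^t$. The paper phrases this through the Jordan canonical form and stops at the observation that the $\lambda_1^t$ term dominates; you go one step further and make explicit the link that actually turns ``dominant eigenvalue'' into ``low-pass filter,'' namely that the right Perron eigenvector is $\bm{1}=\sqrt{N}\,\bm{f}_1$, so the rank-one limit $\mathbf{A}^t\bm{x}\to(\bm{\pi}^{\mathtt{T}}\bm{x})\bm{1}$ lands entirely in the low-frequency subspace ($\text{HFC}[\bm{1}]=\bm{0}$). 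That identification is left implicit in the paper's proof, and your version is the more complete of the two at that decisive step. Your observation that $\mathbf{A}$ is not circulant, so one cannot argue via per-frequency multipliers, is also correct and is the reason both proofs must route through the Perron eigenvector.

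One caution on the degenerate set $\bm{\pi}^{\mathtt{T}}\bm{x}=0$, which you rightly flag and the paper silently ignores: your proposed rate-comparison patch will not close it. If $\bm{x}$ is an exact eigenvector of $\mathbf{A}$ for some $\lambda_2\neq 0$ with $|\lambda_2|<1$, then $\mathbf{A}^t\bm{x}=\lambda_2^t\bm{x}$ and the ratio $\|\text{HFC}[\mathbf{A}^t\bm{x}]\|_2/\|\text{LFC}[\mathbf{A}^t\bm{x}]\|_2$ is constant in $t$ and generically nonzero, so the claimed limit fails there; low- and high-frequency content decay at exactly the same rate, not with the low-frequency part surviving longer. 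The statement as quantified over all $\bm{x}\in\mathbb{R}^N$ therefore needs either the restriction $\bm{\pi}^{\mathtt{T}}\bm{x}\neq 0$ (equivalently $\text{LFC}$ taken of the limiting direction being nonzero) or a convention for the $0/0$ case. This is a defect of the theorem statement shared by both proofs, not a flaw specific to yours.
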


Theorem~\ref{theorem:lpf} is ensured by the Perron-Frobenius theorem~\cite{meyer2023matrix,he2021identifying}, revealing that the attention matrix is a low-pass filter independent of the input key and query matrices. A proof of Theorem~\ref{theorem:lpf} and the formal definition of the low-pass filter are provided in Appendix. If the self-attention matrix is applied successively, the final output loses all feature expressiveness as the number of layers increases to infinity. 

Therefore, the self-attention causes the oversmoothing problem that Tranformer-based SR models lose feature representation in deep layers (see Fig.~\ref{fig:over}). As can be seen from the empirical analysis of Fig.~\ref{fig:over}, as the number of layers of these models increases, the cosine similarity increases, and the singular value tends to decay rapidly~\footnote{This indicates that the largest singular value predominates and the other outliers are much smaller, and there is a potential risk of losing embedding rank.}~\cite{fan2023addressing}. This inevitably causes the model to fail to capture the user's detailed preferences, and performance degradation is a natural result.

We not only alleviate oversmoothing using a high-pass filter as motivation against this background, but also try to capture short-term preferences of user behavior patterns through inductive bias.

\section{Proposed Method}
Here, we introduce the overview of BSARec, the method behind our BSARec, and its relation to previous models.

\subsection{Embedding Layer}
Given a user's action sequence $\mathcal{S}^{u}$ and the maximum sequence length $N$, the sequence is first truncated by removing earliest item if $|\mathcal{S}^u|>N$ or padded with 0s to get a fixed length sequence $\bm{s}=(s_1,s_2,\ldots,s_N)$. With an item embedding matrix $\mathbf{M}\in \mathbb{R}^{|\mathcal{V}|\times D}$, we define the embedding representation of the sequence $\mathbf{E}^{u}$, where D is the latent dimension size and $\mathbf{E}^{u}_{i}=\mathbf{M}_{s_{i}}$. To make our model sensitive to the positions of items, we adopt positional embedding to inject additional positional information while maintaining the same embedding dimensions of the item embedding. A trainable positional embedding $\mathbf{P}\in \mathbb{R}^{N\times D}$ is added to the sequentially ordered item embedding matrix $\mathbf{E}^{u}$. Moreover, dropout and layer normalization are also implemented:
\begin{align}
    \mathbf{E}^{u} = \text{Dropout}(\text{LayerNorm}(\mathbf{E}^{u}+\mathbf{P} )).
\end{align}

% \begin{align}
%     \mathbf{E}_{\mathcal{S}^{u}} = [\mathbf{m}_{s_1}+\mathbf{p}_{1}, \mathbf{m}_{s_2}+\mathbf{p}_{2}, \ldots \mathbf{m}_{s_n}+\mathbf{p}_{n}].
% \end{align}

\subsection{Beyond Self-Attention Encoder}
We develop item encoders by stacking beyond self-attention (BSA) blocks based on the embedding layer. It consists of 3 modules (see Fig.~\ref{fig:overall}): BSA layer, attentive inductive bias with frequency rescaler, and feed forward network.
\paragraph{Beyond Self-Attention Layer}
Let $\widetilde{\mathbf{A}}^{\ell}$ be a beyond self-attention (BSA), $\mathbf{A}_{\textrm{IB}}^{\ell}$ be a rescaled filter matrix for the $l$-th layer, and $\mathbf{X}^{\ell}$ is the input for the $l$-th layer. When $l=0$, we set $\mathbf{X}^0 = \mathbf{E}^{u}$. We use the following BSA layer:
\begin{align}\label{eq:bsa}
\mathbf{S}^{\ell} = \widetilde{\mathbf{A}}^{\ell}\mathbf{X}^{\ell} = \alpha \mathbf{A}_{\textrm{IB}}^{\ell}\mathbf{X}^{\ell} + (1-\alpha) \mathbf{A}^{\ell}\mathbf{X}^{\ell},
\end{align}where the first term corresponds to DSP, where the discrete Fourier transform is utilized, $\alpha \leq 1$ is a coefficient to (de-)emphasize the inductive bias. Therefore, our main design point is to trade off between the verified inductive bias and the trainable self-attention.

For the multi-head version used in BSARec, the multi-head self-attention (MSA) is defined as:
\begin{align}
    \widehat{\mathbf{X}}^{\ell} = \textrm{MSA}(\mathbf{X}^{\ell})=[\mathbf{S}^1, \mathbf{S}^2, \ldots \mathbf{S}^h]\mathbf{W}^{O},
\end{align}
where $h$ is the number of heads and the projection matrix $\mathbf{W}^{O} \in \mathbb{R}^{D \times D}$ is the learnable parameter.

\begin{figure}
    \centering
   \includegraphics[width=1.0\columnwidth]{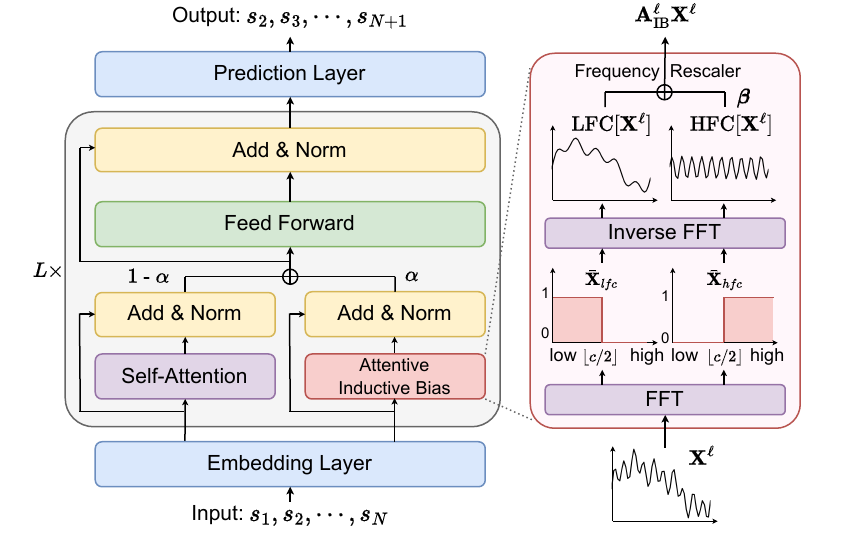}
    \caption{Architecture of our proposed BSARec. We propose a BSA encoder that uses both an inductive bias with a frequency rescaler and original self-attention.}
    \label{fig:overall}
\end{figure}

\subsubsection{Attentive Inductive Bias with Frequency Rescaler}
We propose a filter that injects the attentive inductive bias and, at the same time, adjusts the scale of the frequency by dividing it into low and high frequency components:
\begin{align}
\mathbf{A}_{\textrm{IB}}^{\ell} \mathbf{X}^{\ell} &= \text{LFC}[{\mathbf{X}^{\ell}}]+ \bm{\beta} \text{HFC}[\mathbf{X}^{\ell}],\label{ib}
\end{align}
where $\bm{\beta}$ is a trainable parameter to scale the high-pass filter. In particular, $\bm{\beta}$ can be either a vector with $D$ dimension or a scalar parameter.

\paragraph{Meaning of our Attentive Inductive Bias}We note that DFT is used in Eq.~\eqref{ib}, which assumes the ring graph in Fig.~\ref{fig:lpf} (a) --- in the perspective of self-attention, this inductive bias says that an item to purchase is influenced by its previous item. This attentive inductive bias does not need to be trained since we know that it presents universally for SR.

However, we do not stop at utilizing the inductive bias in a na\"ive way but extract its low and high-frequency information to learn how to optimally mix them in Eq.~\eqref{ib}. To be more specific, suppose a ring graph of $N$ item embeddings. $\text{LFC}[\cdot]$ on them extracts their common signals that do not greatly change following the ring graph topology whereas $\text{HFC}[\cdot]$  extracts locally fluctuating signals (see Fig.~\ref{tab:intro}). By selectively utilizing the high-pass information, we can prevent the oversmoothing problem (see Fig.~\ref{fig:over}). If relying on $\text{LFC}[\cdot]$ only, we cannot prevent the oversmoothing problem.

In addition, we also learn the self-attention matrix $\mathbf{A}^{\ell}$ in Eq.~\eqref{eq:bsa} and combine it with our attentive inductive bias $\mathbf{A}_{\textrm{IB}}^{\ell}$. By separating $\mathbf{A}^{\ell}$ from $\widetilde{\mathbf{A}}^{\ell}$, the self-attention mechanism focuses on capturing non-obvious attentions in $\mathbf{A}^{\ell}$.

\paragraph{Point-wise Feed-Forward Network and Layer Outputs}
The multi-head attention function is primarily based on linear projection. A point-wise feed-forward network is applied to import nonlinearity to the self-attention block. The process is defined as follows:
\begin{align}
    \widetilde{\mathbf{X}}^{\ell} = (\textrm{GELU}(\widehat{\mathbf{X}}^{\ell} \mathbf{W}^{\ell}_1 +\mathbf{b}^{\ell}_1) )\mathbf{W}^{\ell}_2 + \mathbf{b}^{\ell}_2,
\end{align}
where $\mathbf{W}^{\ell}_1, \mathbf{W}^{\ell}_2 \in \mathbb{R}^{D\times D}$ and  $\mathbf{b}^{\ell}_1, \mathbf{b}^{\ell}_2 \in \mathbb{R}^{D\times D}$ are learnable parameters. The dropout layer, residual connection, and layer normalization operations are applied as follows:
\begin{align}
    \mathbf{X}^{{\ell}+1} = \textrm{LayerNorm}(\mathbf{X}^{\ell} + \widehat{\mathbf{X}}^{\ell} + \textrm{Dropout}(\widetilde{\mathbf{X}}^{\ell})).
\end{align}

\subsection{Prediction Layer and Training}
In the final layer of BSARec, we calculate the user's preference score for the item $i$ derived from user's historical interactions. This score is given by:
\begin{align}
    \hat{y}_i = p(v^{u}_{|\mathcal{S}^{u}|+1}=v|\mathcal{S}^{u}) =\mathbf{e}_v^{\mathtt{T}}{\mathbf{X}^{L}_{|\mathcal{S}^{u}|}},
\end{align} where $\mathbf{e}_v$ is the representation of item $v$ from $\mathbf{M}$, and $\mathbf{X}^{L}_{|\mathcal{S}^{u}|}$ is the output of the $L$-layer blocks at step $|\mathcal{S}^{u}|$. This dot product computes the similarity between these two vectors to give us the preference score $\hat{y}_i$.

The cross-entropy (CE) loss function is usually used in SR since the next item prediction task is treated as a classification task over the whole item set~\cite{zhang2019feature,qiu2022duorec,du2023fearec}. We adopt the CE loss to optimize the model parameter as:
\begin{align}
    \mathcal{L} = -log\frac{\exp(\hat{y}_g)}{\sum_{i\in|\mathcal{V}|}{\exp(\hat{y}_i)}},
\end{align} where $g\in|\mathcal{V}|$ is the ground truth item.

\begin{table*}[t]
    \centering
    \footnotesize
    \resizebox{.95\textwidth}{!}{%
    % \begin{tabular}{c|l|ccccccc|c|c}
    \begin{tabular}{ll ccccccccc}
    \specialrule{1pt}{1pt}{2pt}
    % \multicolumn{2}{c|}{Dataset} & \multicolumn{6}{c|}{\texttt{Electricity}}&\multicolumn{6}{c}{\texttt{Energy}} \\
    Datasets                & Metric    & Caser & GRU4Rec & SASRec    & BERT4Rec  & FMLPRec  & DuoRec    & FEARec    & BSARec  & Improv.    \\
    \specialrule{1pt}{1pt}{2pt}

\multirow{6}{*}{Beauty} 
  & HR@5    & 0.0125 & 0.0169 & 0.0340 & 0.0469 & 0.0346 & \underline{0.0707} & 0.0706 & \textbf{0.0736} & 4.10\%\\ 
  & HR@10   & 0.0225 & 0.0304 & 0.0531 & 0.0705 & 0.0559 & 0.0965 & \underline{0.0982} & \textbf{0.1008} & 2.65\%\\ 
  & HR@20   & 0.0403 & 0.0527 & 0.0823 & 0.1073 & 0.0869 & 0.1313 & \underline{0.1352} & \textbf{0.1373} & 1.55\%\\ 
  & NDCG@5  & 0.0076 & 0.0104 & 0.0221 & 0.0311 & 0.0222 & 0.0501 & \underline{0.0512} & \textbf{0.0523} & 2.15\%\\ 
  & NDCG@10 & 0.0108 & 0.0147 & 0.0283 & 0.0387 & 0.0291 & 0.0584 & \underline{0.0601} & \textbf{0.0611} & 1.66\%\\ 
  & NDCG@20 & 0.0153 & 0.0203 & 0.0356 & 0.0480 & 0.0369 & 0.0671 & \underline{0.0694} & \textbf{0.0703} & 1.30\%\\ 
\specialrule{0.5pt}{1pt}{2pt} 
\multirow{6}{*}{Sports} 
  & HR@5    & 0.0091 & 0.0118 & 0.0188 & 0.0275 & 0.0220 & 0.0396 & \underline{0.0411} & \textbf{0.0426} & 3.65\%  \\ 
  & HR@10   & 0.0163 & 0.0187 & 0.0298 & 0.0428 & 0.0336 & 0.0569 & \underline{0.0589} & \textbf{0.0612} & 3.90\%  \\ 
  & HR@20   & 0.0260 & 0.0303 & 0.0459 & 0.0649 & 0.0525 & 0.0791 & \underline{0.0836} & \textbf{0.0858} & 2.63\%  \\ 
  & NDCG@5  & 0.0056 & 0.0079 & 0.0124 & 0.0180 & 0.0146 & 0.0276 & \underline{0.0286} & \textbf{0.0300} & 4.90\%  \\ 
  & NDCG@10 & 0.0080 & 0.0101 & 0.0159 & 0.0229 & 0.0183 & 0.0331 & \underline{0.0343} & \textbf{0.0360} & 4.96\%  \\ 
  & NDCG@20 & 0.0104 & 0.0131 & 0.0200 & 0.0284 & 0.0231 & 0.0387 & \underline{0.0405} & \textbf{0.0422} & 4.20\%  \\ 
\specialrule{0.5pt}{1pt}{2pt} 
\multirow{6}{*}{Toys} 
  & HR@5    & 0.0095 & 0.0121 & 0.0440 & 0.0412 & 0.0432 & 0.0770 & \underline{0.0783} & \textbf{0.0805} & 2.81\%\\ 
  & HR@10   & 0.0161 & 0.0211 & 0.0652 & 0.0635 & 0.0671 & 0.1034 & \underline{0.1054} & \textbf{0.1081} & 2.56\%\\ 
  & HR@20   & 0.0268 & 0.0348 & 0.0929 & 0.0939 & 0.0974 & 0.1369 & \underline{0.1397} & \textbf{0.1435} & 2.72\%\\ 
  & NDCG@5  & 0.0058 & 0.0077 & 0.0297 & 0.0282 & 0.0288 & 0.0568 & \underline{0.0574} & \textbf{0.0589} & 2.61\%\\ 
  & NDCG@10 & 0.0079 & 0.0106 & 0.0366 & 0.0353 & 0.0365 & 0.0653 & \underline{0.0661} & \textbf{0.0679} & 2.72\%\\ 
  & NDCG@20 & 0.0106 & 0.0140 & 0.0435 & 0.0430 & 0.0441 & 0.0737 & \underline{0.0747} & \textbf{0.0768} & 2.81\%\\ 
\specialrule{0.5pt}{1pt}{2pt} 
\multirow{6}{*}{Yelp} 
  & HR@5    & 0.0117 & 0.0130 & 0.0149 & 0.0256 & 0.0159 & \underline{0.0271} & 0.0262 & \textbf{0.0275} & 1.48\%  \\ 
  & HR@10   & 0.0197 & 0.0221 & 0.0249 & 0.0433 & 0.0287 & \underline{0.0442} & 0.0437 & \textbf{0.0465} & 5.20\%  \\ 
  & HR@20   & 0.0337 & 0.0383 & 0.0424 & \underline{0.0717} & 0.0490 & \underline{0.0717} & 0.0691 & \textbf{0.0746} & 4.04\%  \\ 
  & NDCG@5  & 0.0070 & 0.0080 & 0.0091 & 0.0159 & 0.0100 & \textbf{0.0170}    & 0.0165 & \textbf{0.0170} & 0.00\%  \\ 
  & NDCG@10 & 0.0096 & 0.0109 & 0.0123 & 0.0216 & 0.0142 & \underline{0.0225} & 0.0221 & \textbf{0.0231} & 2.67\%  \\ 
  & NDCG@20 & 0.0131 & 0.0150 & 0.0167 & 0.0287 & 0.0192 & \underline{0.0294} & 0.0285 & \textbf{0.0302} & 2.72\%  \\ 
\specialrule{0.5pt}{1pt}{2pt} 
\multirow{6}{*}{LastFM} 
  & HR@5    & 0.0303 & 0.0312 & 0.0413 & 0.0294 & 0.0367 & \underline{0.0431} & \underline{0.0431} & \textbf{0.0523} & 21.35\%  \\ 
  & HR@10   & 0.0431 & 0.0404 & \underline{0.0633} & 0.0459 & 0.0560 & 0.0624 & 0.0587 &             \textbf{0.0807} & 27.49\%  \\ 
  & HR@20   & 0.0642 & 0.0541 & 0.0927 & 0.0596 & 0.0826 & \underline{0.0963} & 0.0826 &             \textbf{0.1174} & 21.91\%  \\ 
  & NDCG@5  & 0.0227 & 0.0217 & 0.0284 & 0.0198 & 0.0243 & 0.0300 & \underline{0.0304} &             \textbf{0.0344} & 13.16\%  \\ 
  & NDCG@10 & 0.0268 & 0.0245 & 0.0355 & 0.0252 & 0.0306 & \underline{0.0361} & 0.0354 &             \textbf{0.0435} & 20.50\%  \\ 
  & NDCG@20 & 0.0321 & 0.0280 & 0.0429 & 0.0286 & 0.0372 & \underline{0.0446} & 0.0414 &             \textbf{0.0526} & 17.94\%  \\ 
\specialrule{0.5pt}{1pt}{2pt} 
\multirow{6}{*}{ML-1M} 
  & HR@5    & 0.0927 & 0.1005 & 0.1374 & 0.1512 & 0.1316 & \underline{0.1838} & 0.1834 & \textbf{0.1944} & 5.77\%  \\ 
  & HR@10   & 0.1556 & 0.1657 & 0.2137 & 0.2346 & 0.2065 & 0.2704 & \underline{0.2705} & \textbf{0.2757} & 1.92\%  \\ 
  & HR@20   & 0.2488 & 0.2664 & 0.3245 & 0.3440 & 0.3137 & \underline{0.3738} & 0.3714 & \textbf{0.3884} & 3.91\%  \\ 
  & NDCG@5  & 0.0592 & 0.0619 & 0.0873 & 0.1021 & 0.0846 & \underline{0.1252} & 0.1236 & \textbf{0.1306} & 4.31\%  \\ 
  & NDCG@10 & 0.0795 & 0.0828 & 0.1116 & 0.1289 & 0.1087 & \underline{0.1530} & 0.1516 & \textbf{0.1568} & 2.48\%  \\ 
  & NDCG@20 & 0.1028 & 0.1081 & 0.1395 & 0.1564 & 0.1356 & \underline{0.1790} & 0.1771 & \textbf{0.1851} & 3.41\%  \\ 
        \specialrule{1pt}{1pt}{1pt}
    \end{tabular}%
    }
    \caption{Performance comparison of different methods on 6 datasets. The best results are in  boldface and the second-best results are underlined.`Improv.' indicates the relative improvement against the best baseline performance.}
    % Improvements over baselines are statistically significant with $p < 0.05$.}
    \label{tab:main_exp}
\end{table*}

\subsection{Relation to Previous Models}
Several Transformer-based SR models can be a special case of BSARec, and the comparison with existing models is as follows:
i) When $\alpha$ is 0 in BSARec, our model is reduced to SASRec. This is because pure self-attention is used as it is. However, one difference is that their loss functions are different. BSARec uses the CE loss, while SASRec uses the BCE loss.  Even in the case of DuoRec, which extends SASRec with contrastive learning, it can be a BSARec with $\alpha=0$ except for contrastive learning.
ii) In the case of the FMLPRec, it uses DFT only without self-attention. Nevertheless, the most significant difference is that the filter matrix itself in FMLPRec is a learnable matrix. Because of this, FMLPRec's filter is inevitably learned as a low-pass filter, while BSARec uses a filter rescaler to simultaneously use a high-pass filter.
iii) Similar to BSARec, FEARec separates low-frequency and high-frequency information in the frequency domain. However, FEARec allows the frequency domain to be learned separately before entering its Transformer's encoder. BSARec adaptively uses low and high-frequency information by using a frequency rescaler in a step to inject an inductive bias. FEARec is designed with a complex model structure using contrastive learning and frequency normalization. However, our model shows better performance with a much simpler architecture.

\section{Experiments}
\subsection{Experimental Setup}
\paragraph{Datasets} We evaluate our model on 6 SR datasets where the sparsity and domain varies: i,ii,iii) Amazon Beauty, Sports, Toys~\cite{mcauley2015image}, iv) Yelp, v) ML-1M~\cite{harper2015movielens}, and vi) LastFM. We followed the data pre-processing procedure from \citet{zhou2020s3, zhou2022fmlprec}, where all reviews and ratings are regarded as implicit feedback. 
The detailed dataset statistics are presented in Appendix.
% The detailed dataset statistics are presented in Table~\ref{tab:data}.

\paragraph{Baselines} To verify the effectiveness of our model, we compare our method with well-known SR baselines with three categories:
\begin{itemize}
    \item RNN or CNN-based sequential models: GRU4Rec~\cite{hidasi2016gru4rec} and Caser~\cite{tang2018caser}.
    \item Transformer-based sequential models: SASRec~\cite{kang2018sasrec}, BERT4Rec~\cite{sun2019bert4rec}, and FMLPRec~\cite{zhou2022fmlprec}.
    \item Transformer-based sequential models with contrsastive learning: DuoRec~\cite{qiu2022duorec} and FEARec~\cite{du2023fearec}.
\end{itemize}

\paragraph{Implementation Details}
Our method is implemented in PyTorch on an NVIDIA RTX 3090 with 16 GB memory.
We search the best hyperparameters for baselines based on their recommended hyperparameters. We conduct experiments under the following hyperparameters: the coefficient $\alpha$ is in $\{0.1, 0.3, 0.5, 0.7, 0.9\}$, and $c$ is chosen from $\{1, 3, 5, 7, 9\}$. The number of BSA blocks $L$ is set to 2, and the number of heads in Transformer $h$ is in $\{1, 2, 4\}$. The dimension of $D$ is set to 64, and the maximum sequence length $N$ is set to 50. For training, the Adam optimizer is optimized with a learning rate in \{\num{5e-4}, \num{1e-3}\}, and the batch size is set to 256. The best hyperparameters are in Appendix for reproducibility.

\paragraph{Metrics}To measure the recommendation accuracy, we commonly use widely used Top-$k$ metrics, HR@$k$ (Hit Rate) and NDCG@$k$ (Normalized Discounted Cumulative Gain) to evaluate the recommended list, where $k$ is set to 5, 10, and 20. To ensure a fair and comprehensive comparison, we analyze the ranking results across the full item set without negative sampling~\cite{krichene2020sampled}.

\subsection{Experimental Results}
% Overall, our proposed method, BSARec, clearly marks the best average accuracy {\color{red}in Table~\ref{tab:main_exp}}.
Table~\ref{tab:main_exp} presents the detailed recommendation performance. Overall, our proposed method, BSARec, clearly marks the best accuracy. First, compared to existing RNN-based and CNN-based methods, Transformer-based methods show better performance in modeling interaction sequences in SR.
% Second, Transformer-based methods show better performance of BERT4Rec or FMLPRec models than SASRec.
Second, in Transformer-based methods, BERT4Rec and FMLPRec models outperform SASRec. 
In particular, FMLPRec redesigned the self-attention of the existing Transformer only with MLP, but it still does not perform well in all datasets.
Third, there is no doubt that models using contrastive learning show higher results than models that do not. DuoRec and FEARec significantly outperform SASRec, BERT4Rec, and FMLPRec.

Surprisingly, however, BSARec records the best performance across all datasets and all metrics. The most surprising thing is that it can show better performance than DuoRec and FEARec without using contrastive learning. In LastFM, BSARec shows a performance improvement of 27.49\% based on HR@10. Thus, our model leaves a message that it can show good performance without going to complex model design by adding contrastive learning.

\begin{table}[t]
    \centering
    \setlength{\tabcolsep}{1.5pt}
    % \resizebox{1.0\columnwidth}{!}{%
    % \begin{tabular}{p{1.5cm} cccccc}\toprule
    \begin{tabular}{l cccccc}\toprule
    \multirow{2}{*}{Methods}    && \multicolumn{2}{c}{Beauty}     && \multicolumn{2}{c}{Toys}  \\ \cmidrule(lr){3-4}\cmidrule(lr){6-7}
                                && HR@20 &  NDCG@20   &&  HR@20 &  NDCG@20  \\ 
    \midrule
    BSARec                      && \textbf{0.1373} & \textbf{0.0703} && \textbf{0.1435} & \textbf{0.0768} \\
    \midrule
    % \cmidrule(lr){1-7}
    Only $\mathbf{A}$           && 0.1265 & 0.0657 && 0.1320 & 0.0720 \\
    Only $\mathbf{A}_{\textrm{IB}}$    && 0.1338 & 0.0677 && 0.1402 & 0.0744 \\
    % \midrule
    Scalar $\beta$                 && 0.1333 & 0.0685 && \textbf{0.1435} & 0.0756\\
    \bottomrule
    \end{tabular}
    % }
    \caption{Ablation studies on $\widetilde{\mathbf{A}}$ and $\bm{\beta}$. More results in other datasets are in Appendix.}
    \label{tab:ablation}
\end{table}

\subsection{Ablation, Sensitivity, and Additional Studies}
\paragraph{Ablation Studies}
As ablation study models, we define the following models: i) the first ablation model has only the self-attention term i.e., $\mathbf{A}$, ii) the second ablation model has only the attentive inductive bias term, i.e., $\mathbf{A}_{\textrm{IB}}$ in Eq.~\ref{eq:bsa}, and iii) the third ablation model uses $\beta$ as a single parameter. For Beauty and Toys, the ablation study model with only $\mathbf{A}_{\textrm{IB}}$ outperforms the case with only $\mathbf{A}$ (e.g., HR@20 in Beauty by $\mathbf{A}_{\textrm{IB}}$ of 0.1338 versus 0.1265 by $\mathbf{A}$). However, BSARec, which utilizes them all, outperforms them. This shows that both are required to achieve the best accuracy.

\paragraph{Sensitivity to $\alpha$} Fig.~\ref{fig:sen_alpha} shows the NDCG@20 and HR@20 by varying the $\alpha$. For Beauty, we find our BSARec, a larger value of $\alpha$ is preferred. For ML-1M, with $\alpha=0.3$, we can achieve the best accuracy. The trade-off between the self-attention matrix and the inductive bias differs for each dataset from these results.
\paragraph{Sensitivity to $c$}

Fig.~\ref{fig:sen_c} shows the NDCG@20 and HR@20 by varying the $c$. For Beauty, the best accuracy is achieved when $c$ is 5. For ML-1M, the larger the value of $c$, the better performance is reached.

\paragraph{Visualization of Learned $\bm{\beta}$} 
In Fig.~\ref{fig:beta} (a), we show learned $\bm{\beta}$ at each layer for all datasets. We can see that a higher weight is learned in the first layer than in the second layer, which confirms that putting more weight on high-frequency in the first layer is effective. In particular, LastFM and Beauty show higher $\bm{\beta}$ weights than other datasets.

\begin{figure}[t]
    \begin{subfigure}[{Beauty}]{\includegraphics[width=.494\columnwidth]{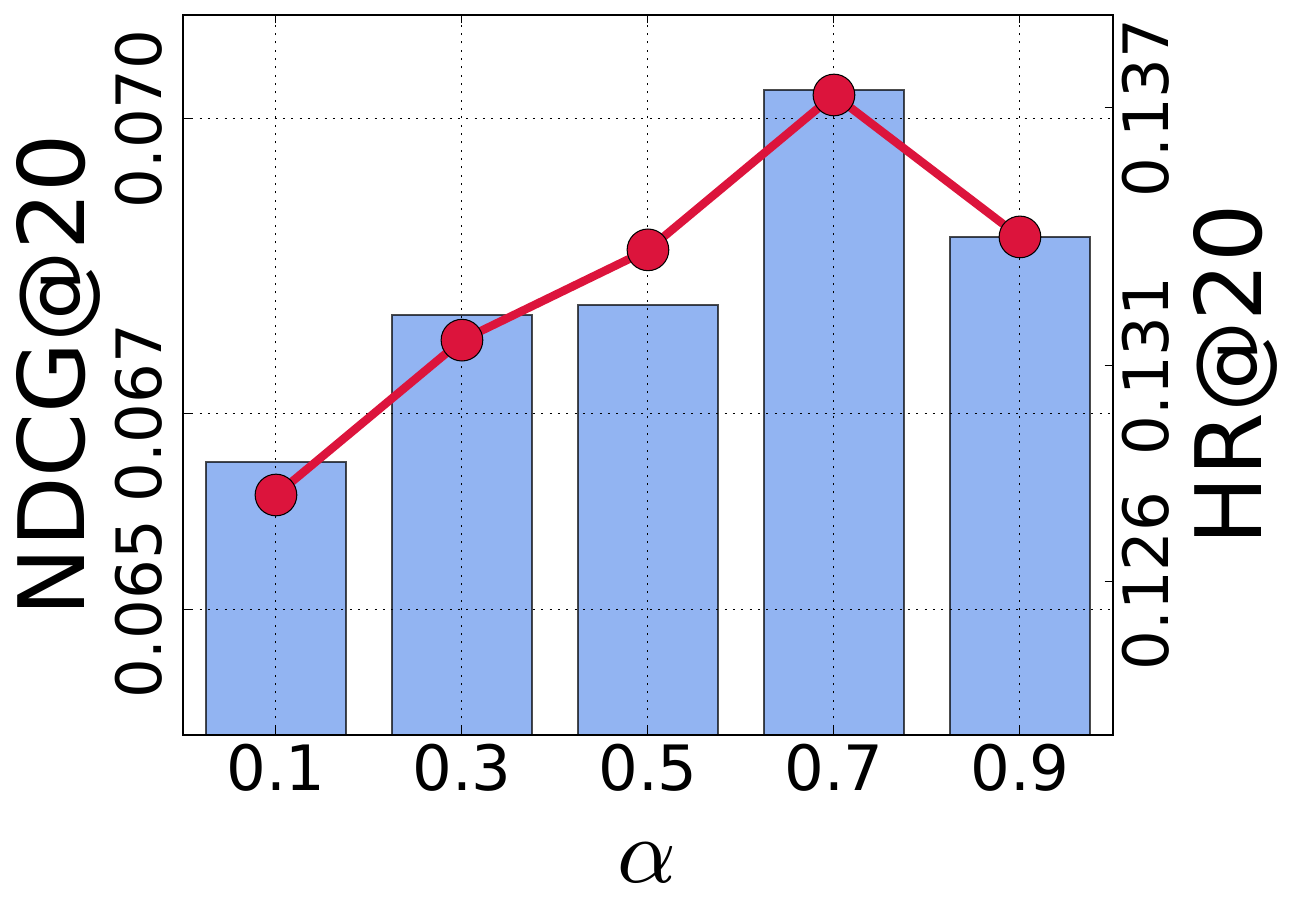}
        \label{fig:sen_alpha_Beauty}}
    \end{subfigure}
    \hspace{-1em}
    \begin{subfigure}[ML-1M]{\includegraphics[width=.494\columnwidth]{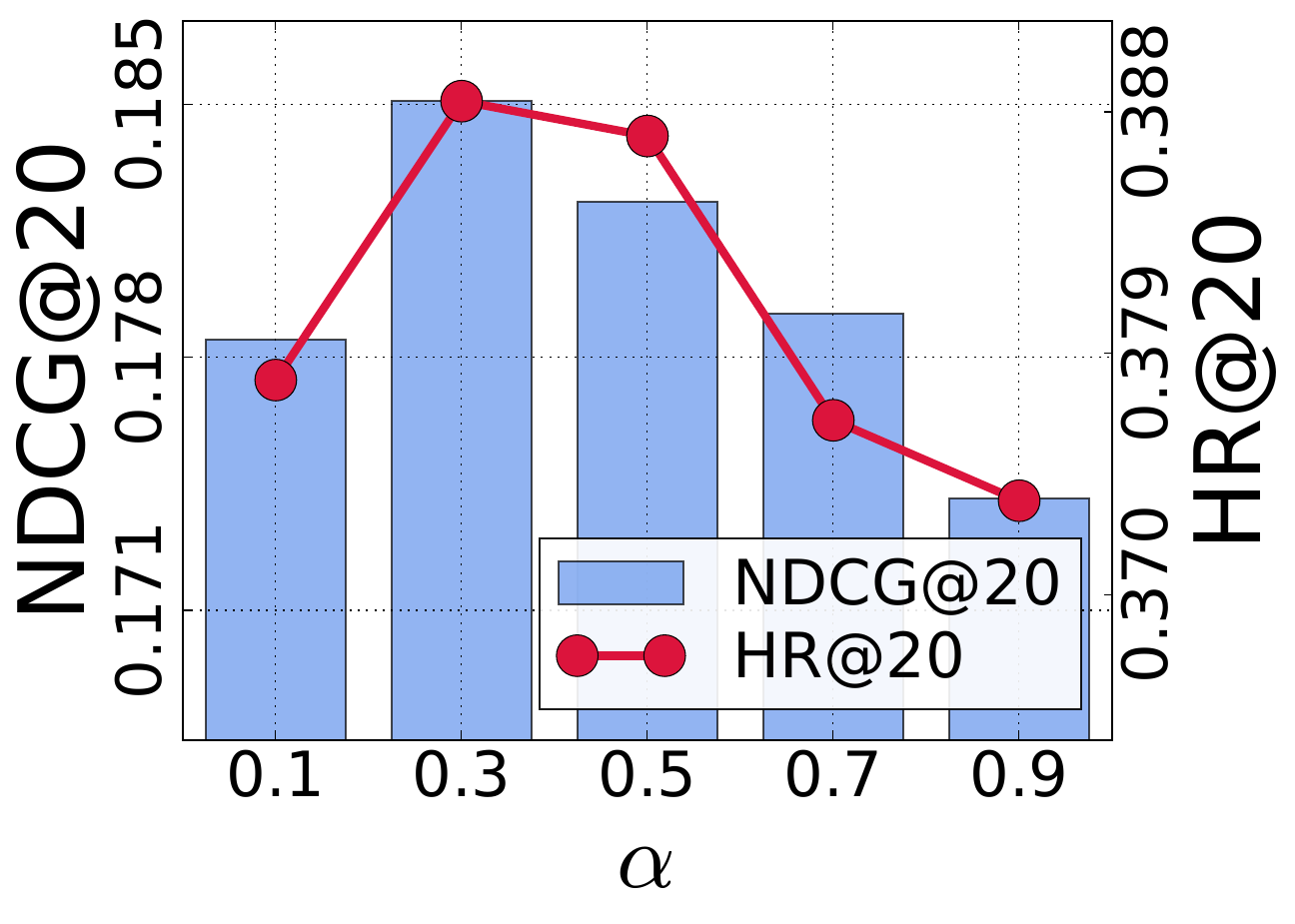}
        \label{fig:sen_alpha_LastFM}}
    \end{subfigure}
    \caption{Sensitivity to $\alpha$. More results in other datasets are in Appendix.}
    \label{fig:sen_alpha}
\end{figure}

\begin{figure}[t]
    \begin{subfigure}[Beauty]{\includegraphics[width=.494\columnwidth]{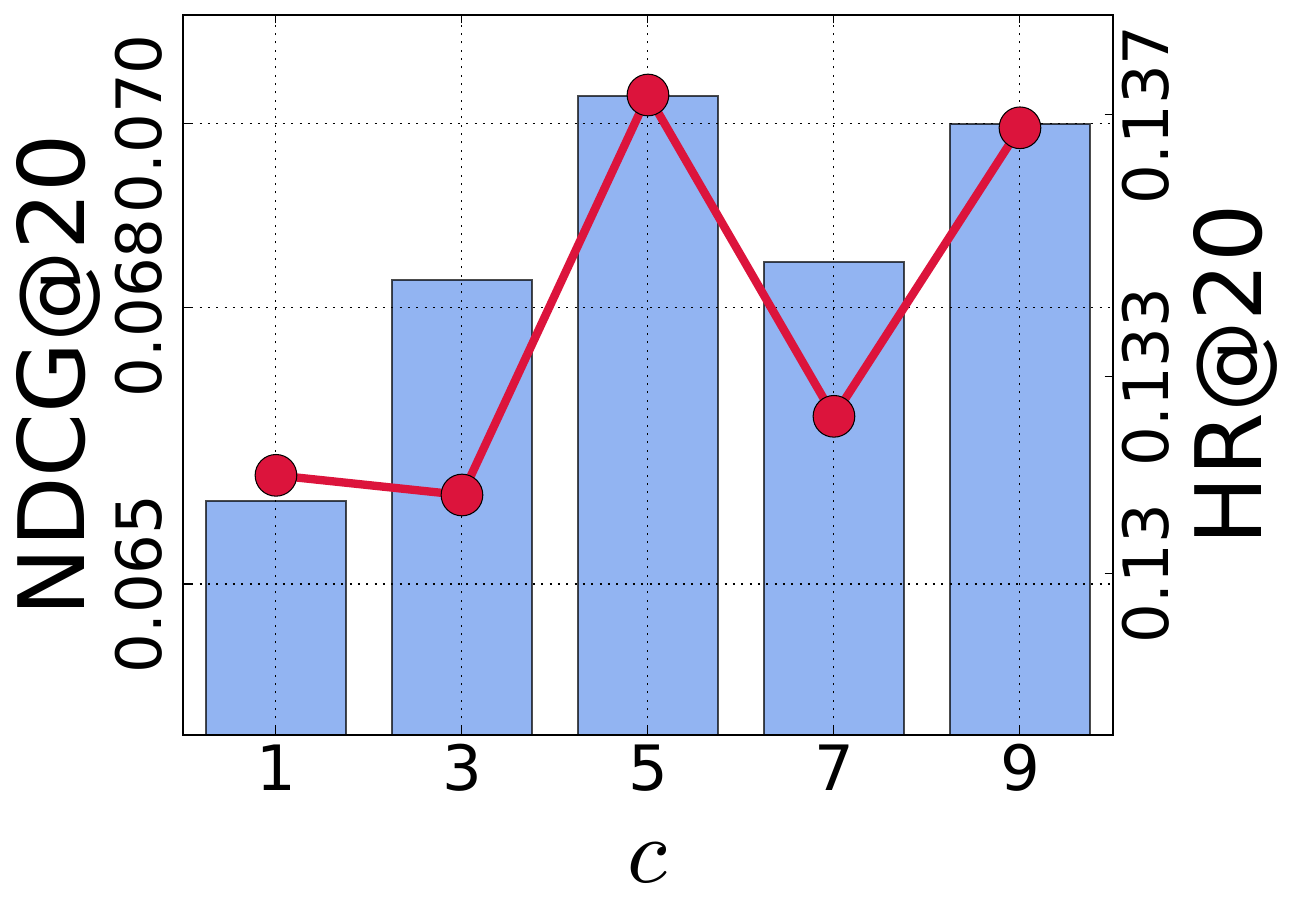}
        \label{fig:sen_k_Beauty}}
    \end{subfigure}
    \hspace{-1em}
    \begin{subfigure}[ML-1M]{\includegraphics[width=.494\columnwidth]{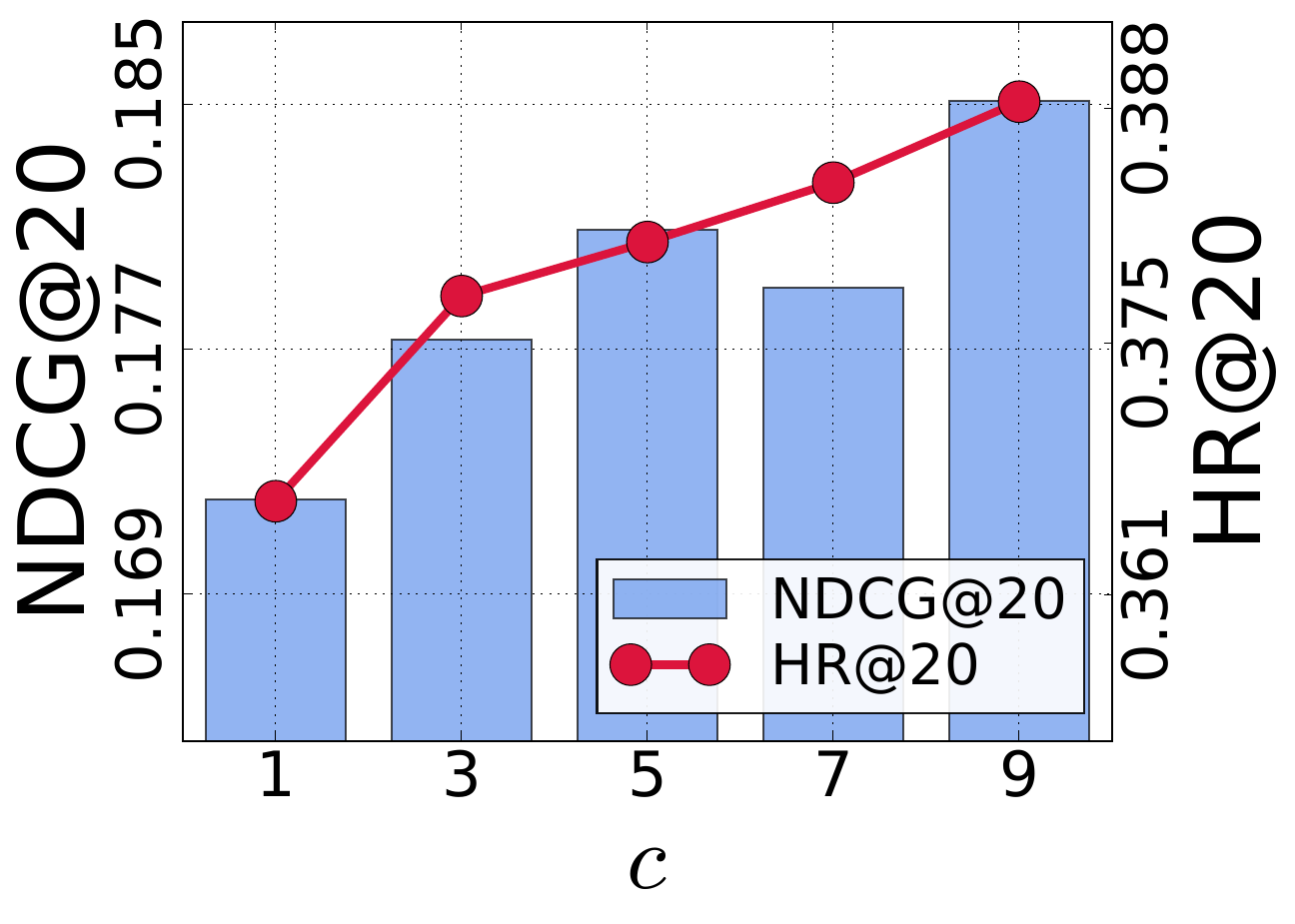}
        \label{fig:sen_k_LastFM}}
    \end{subfigure}
    \caption{Sensitivity to $c$. More results in other datasets are in Appendix.}
    \label{fig:sen_c}
\end{figure}

\begin{figure}[t]
    \centering
    \subfigure[{Visualization of learned $\bm{\beta}$}]{\includegraphics[width=0.495\columnwidth]{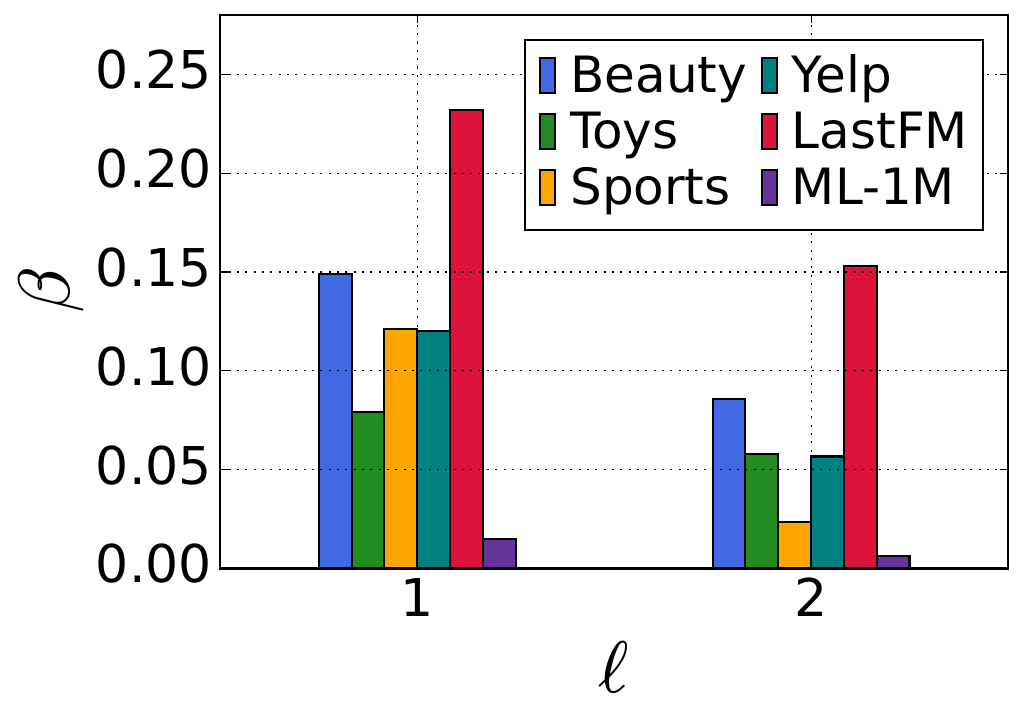}}
    \subfigure[Case study]{\includegraphics[width=0.495\columnwidth]{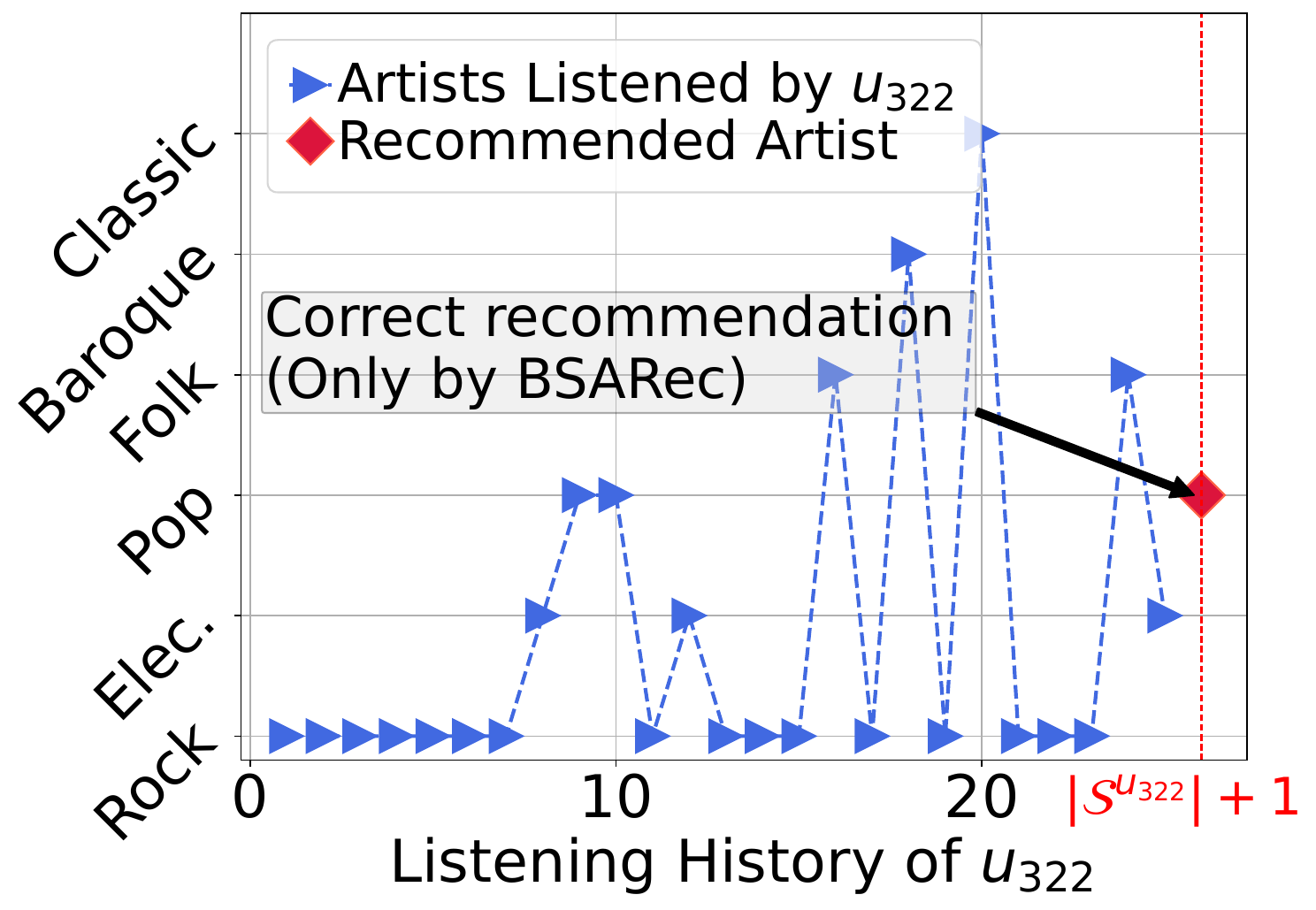}}
    \caption{(a) Visualization of learned $\bm{\beta}$, and (b) an example recommendation in LastFM. The y-axis represents the genre of the artist the user listened to.}
    \label{fig:beta}
\end{figure}

\paragraph{Case Study} 
% We introduce some key observations from our experiments.
We introduce a case study obtained from our experiment.
In Fig.~\ref{fig:beta} (b), we analyze one of the heavy users in LastFM. The user $u_{322}$ constantly listens to artists, mainly in the rock genre. In other models, $u_{322}$ cannot capture sudden interaction changes in the next step. Only BSARec recommends an artist from the pop genre as the next artist $u_{322}$ will listen to. This shows that BSARec can capture high-frequency signals that are abrupt changes in user preference.

\subsection{Model Complexity and Runtime Analyses}
To evaluate the overhead of BSARec, we evaluate the number of parameters and runtime per epoch during training. The results are shown in Table~\ref{tab:runtime}. Overall, BSARec increases total parameters marginally. BSARec is actually faster to train than FEARec and DuoRec using contrastive learning. For ML-1M, BSARec is 7.02\% slower than SASRec, but considering the performance difference, it is not a big deal.

\begin{table}[t]
    \centering
    \setlength{\tabcolsep}{3.2pt}
    \begin{tabular}{l cccccc}\toprule
    \multirow{2}{*}{Methods}    && \multicolumn{2}{c}{Beauty}     && \multicolumn{2}{c}{ML-1M}  \\ 
                                \cmidrule(lr){3-4}\cmidrule(lr){6-7}
                                && \# params & s/epoch              && \# params & s/epoch  \\ 
    \midrule
    BSARec                      && 878,208 & 12.75                   && 322,368 & 20.73 \\
    \midrule
    SASRec                      && 877,824 & 10.41                  && 321,984 & 19.37 \\
    DuoRec                      && 877,824 & 19.26                   && 321,984 & 32.33 \\
    FEARec                      && 877,824 & 156.83                   && 321,984 & 278.24 \\
        \bottomrule
    \end{tabular}
    \caption{The number of parameters and training time (runtime per epoch) on Beauty and ML-1M. More results in other datasets are in Appendix.}
    \label{tab:runtime}
\end{table}

\section{Related Work}

\subsection{Sequential Recommendation}
In SR, the primary objective is to recommend the next item based on the sequential patterns inherent in the user's historical interactions. 
FPMC~\cite{rendle2010fpmc} in SR incorporates Markov Chains to capture item-item transitions. Fossil~\cite{he2016fossil} extends the approach to consider higher-order item transitions, improving its predictive capabilities. 

Several notable works have been conducted in this area, each presenting distinct approaches. Early approaches~\cite{rendle2010fpmc,he2016fossil} tried to improve prediction by utilizing Markov Chains to transition between items in the SR. Another avenue of SR leverages convolutional neural networks for sequence modeling, as seen in Caser~\cite{tang2018caser}. Caser treats the embedding matrix of items in the sequence as an image and applies convolution operators to capture local item-item interactions effectively.

The advancements in deep neural network-based SR methods have also made a profound impact on SR, leading to the adoption of RNNs and self-attention mechanisms. For instance, GRU4Rec~\cite{hidasi2016gru4rec} proposes the utilization of GRUs. The success of Transformer-based models, exemplified by Transformer, has further motivated researchers to explore the potential of self-attention in SR. Notably, SASRec~\cite{kang2018sasrec} and BERT4Rec~\cite{sun2019bert4rec} have demonstrated the efficacy of self-attention. These works signify the continued pursuit of enhanced SR methods by integrating self-attention.

With their success, SR models are actively studied~\cite{qiu2022duorec,zhou2022fmlprec,du2023fearec,lin2023mixed,zhou2023attention,yue2023linear,liu2023diffusion,jiang2023adamct}.
Recently, contrastive learning has been used as an aid to improve SR performance. DuoRec~\cite{qiu2022duorec} uses unsupervised model-level augmentation and supervised semantic positive samples for contrastive learning. FMLPRec~\cite{zhou2022fmlprec} proposes a filter-enhanced MLP. 
This approach utilizes a global filter to eliminate frequency domain noise. However, the global filter tends to assign more significance to lower frequencies while undervaluing relatively higher frequencies. 
FEARec~\cite{du2023fearec} is a contrastive learning-based model that uses time domain attention and autocorrelation.
AC-TSR framework~\cite{zhou2023attention} calibrates unreliable attention weights generated by existing Trransformer-based SR models.
AdaMCT~\cite{jiang2023adamct}, which appears at the same time as our work, incorporates locality-induced bias into the Transformer using a local convolutional filter.

\subsection{Oversmoothing and Transformers}
The concept of oversmoothing was first presented by~\citet{li2018deeper} in the field of graph research. Intuitively, the expression converges to a constant after repeatedly exchanging messages with neighbors as the layer of graph neural networks goes to infinity, and research is active to solve this problem~\cite{rusch2022gcon,choi2023gread}. Coincidentally, a parallel occurrence to oversmoothing is observed in Transformers. Early work empirically attributes this to attention collapse or patch or token uniformity~\cite{zhou2021deepvit,gong2021vision}. \citet{dong2021attention} also reveals that the pure Transformer output converges to a rank 1 matrix. There have been several attempts in computer vision to solve this problem~\cite{wang2022antioversmoothing,guo2023contranorm,choi2023graph}, but in SR, there is only one study that solves fast singular value decay~\cite{fan2023addressing}.

\section{Conclusion}
This paper delves into the realm of sequential recommendation (SR) built upon Transformers, an avenue that has garnered substantial success and popularity. The self-attention within Transformers encounters limitations stemming from insufficient inductive bias and its low-pass filtering properties. We also reveal the oversmoothing due to this low-pass filter in SR.
To address this, we introduce BSARec, which uses a combination of attentive inductive bias and vanilla self-attention and integrates low and high-frequencies to mitigate oversmoothing. By understanding and addressing the limitations of self-attention, BSARec significantly advances SR. Our model surpasses 7 baseline methods across 6 datasets in recommendation performance. In future work, we aim to delve deeper into the frequency dynamics of the self-attention for SR.

\section*{Acknowledgements}
Noseong Park is the corresponding author. 
This work was supported by an IITP grant funded by the Korean government (MSIT) (No.2020-0-01361, Artificial Intelligence Graduate School Program (Yonsei University); No.2022-0-00113, Developing a Sustainable Collaborative Multi-modal Lifelong Learning Framework).

\bibliography{ref}

\clearpage
\appendix
% \section{Oversmoothing and Transformers}
% The concept of oversmoothing is first presented by~\citet{li2018deeper} in the realm of graph research. Intuitively, the expression converges to a constant after repeatedly exchanging messages with neighbors as the layer goes to infinity. Coincidentally, a parallel occurrence to oversmoothing is observed in Transformers. Early work empirically attributes this to attention collapse or patch or token uniformity~\cite{zhou2021deepvit,gong2021vision,guo2023contranorm}. \citet{dong2021attention} also reveals that the pure transformer output converges to a rank 1 matrix. It is observed that the output of a pure transformer converges to a rank 1 matrix~\cite{dong2021attention}. There have been several attempts in computer vision to solve this problem, but in sequential recommendation, there is only study that solve fast singular value decay\cite{fan2023addressing}.

\section{More Preliminaries on Fourier Transform}
We provide the detailed background of Fourier transform. Here, we only consider discrete Fourier transform (DFT) on real-value domain $\mathcal{F}: \mathbb{R}^N\rightarrow\mathbb{C}^{N}$. DFT is a transformation used to convert a discrete time-domain signal into a discrete frequency-domain representation. This transformation is particularly useful in signal processing and many other fields. In this paper, we use the following expression of the Fourier basis, which is the row of this matrix: $\bm{f}_j = [e^{2\pi i(j-1)\cdot0} \ldots e^{2\pi i(j-1)(N-1)}]^{\mathtt{T}}/\sqrt{N}\in\mathbb{R}^N$, where $i$ is the imaginary unit and $j$ denotes the $j$-th row. The DFT is often used in its matrix form as a transformation matrix, which can be applied to a signal through matrix multiplication. We can define the DFT matrix as follows:
\begin{align}\mathbf{F}=\frac{1}{\sqrt{N}}
    \begin{bmatrix}
        1 & 1 & \ldots & 1\\
        1 & e^{2\pi i} & \ldots & e^{2\pi i (N-1)}\\
        \vdots & \vdots & \ddots & \vdots\\
        1 & e^{2\pi i(j-1)\cdot1} & \ldots & e^{2\pi i(j-1)\cdot(N-1)}\\
        \vdots & \vdots & \ddots & \vdots\\
        1 & e^{2\pi i(N-1)} & \ldots & e^{2\pi i(N-1)^2}
    \end{bmatrix},
\end{align}
and its inverse Fourier transform is DFT. Specifically, when the matrix representation of the IDFT is the conjugate transpose (Hermitian) of the DFT matrix, scaled by a factor of $1/\sqrt{N}$, the transformation becomes unitary. The IDFT matrix can be expressed as:
\begin{align}
    \mathbf{F}^{-1} = \frac{1}{\sqrt{N}}\mathbf{F}^H,
\end{align}
where $\mathbf{F}^H$ denotes the conjugate transpose of $\mathbf{F}$. The factor $1/\sqrt{N}$ ensures that the transformation is norm-preserving, meaning that the energy of the signal remains invariant under both the DFT and IDFT. This normalization underpins the unitary nature of the transformation and emphasizes the duality between the time and frequency domains in discrete signal processing. 

This relationship emphasizes that the IDFT is not an isolated operation, but rather intimately linked with the DFT through the conjugate transpose operation and a normalization factor. By utilizing the DFT matrix $\mathbf{F}$ in this manner, we are able to conduct the IDFT without resorting to a separate transformation matrix.

\section{Definition of a Low-Pass Filter}
In this paper, a specific type of filter that preserves only the low-frequency components while reducing the remaining high-frequency components is called a low-pass filter. More precisely, we define a low-pass filter.
\begin{definition}\label{def:lpf}
Given a transformation $f: \mathbb{R}^N \rightarrow \mathbb{R}^N$, with indicating the application of $f$ for $t$ times, the $f$ is a low-pass filter if and only if, for all $\bm{x}\in\mathbb{R}^N$:
\begin{align}
    \lim_{t\rightarrow \infty} \frac{||HFC(f^t(\bm{x}))||_2}{||LFC(f^t(\bm{x}))||_2}=0.
\end{align}

\end{definition}
Definition~\ref{def:lpf} provides a criterion to evaluate the frequency behavior of a transformation. In essence, if after recurrently applying the transformation $f$ to a $\bm{x}$, and then taking its DFT, the high-frequency components are diminished compared to the low-frequency components as time advances, then $f$ can be described as exhibiting a low-pass behavior.

\section{Proof of Theorem~\ref{theorem:lpf}}\label{app:proof}

\begin{customthm}
Let $\mathbf{A} =  \textrm{softmax}(\mathbf{Q}\mathbf{K}^{\mathtt{T}}/\sqrt{d} )$. Then $\mathbf{A}$ inherently acts as a low-pass filter. For all $\bm{x}\in\mathbb{R}^N$, in other words,
\begin{align}
    \lim_{t\rightarrow \infty} \frac{||HFC(f^t(\bm{x}))||_2}{||LFC(f^t(\bm{x}))||_2}=0.
\end{align}
\end{customthm}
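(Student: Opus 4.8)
The plan is to exploit the fact that the softmax makes $\mathbf{A}$ a row-stochastic matrix with strictly positive entries, so that its long-run behaviour is completely governed by the Perron--Frobenius theorem, and then to observe that the resulting dominant direction is exactly the lowest Fourier mode. Concretely, every row of $\mathbf{A}$ is a probability vector, so $\mathbf{A}\bm{1}=\bm{1}$ where $\bm{1}=[1,\ldots,1]^{\mathtt{T}}$; that is, $\lambda_1=1$ is an eigenvalue with right eigenvector $\bm{1}$. Because the entries are strictly positive, Perron--Frobenius guarantees that this eigenvalue is simple, that it equals the spectral radius, and that every other eigenvalue satisfies $|\lambda_i|<1$. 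The decisive structural observation is that $\bm{1}=\sqrt{N}\,\bm{f}_1$, so the Perron eigenvector coincides up to scale with the DC (lowest-frequency) Fourier basis vector; hence $\bm{1}$ lies entirely inside the LFC subspace and $\text{HFC}[\bm{1}]=\bm{0}$.

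First I would write $\mathbf{A}^t\bm{x}$ through the spectral decomposition of $\mathbf{A}$. Splitting $\bm{x}=c_1\bm{1}+\bm{r}$, where $\bm{r}$ collects the contribution of all eigen-directions attached to eigenvalues of modulus strictly below $1$, iteration gives $\mathbf{A}^t\bm{x}=c_1\bm{1}+\mathbf{A}^t\bm{r}$, and the spectral gap $|\lambda_2|<1$ forces $\|\mathbf{A}^t\bm{r}\|_2\to 0$ geometrically (with the standard polynomial correction $t^k|\lambda|^t\to 0$ should $\mathbf{A}$ carry nontrivial Jordan blocks). Thus $\mathbf{A}^t\bm{x}\to c_1\bm{1}$, a pure low-frequency signal. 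Applying the linear, norm-preserving DFT and splitting into components, $\text{HFC}[\mathbf{A}^t\bm{x}]=\text{HFC}[\mathbf{A}^t\bm{r}]\to\bm{0}$ since $\text{HFC}[\bm{1}]=\bm{0}$, whereas $\text{LFC}[\mathbf{A}^t\bm{x}]\to c_1\bm{1}$, which is nonzero for generic $\bm{x}$. Dividing the two norms then sends the ratio to $0$, which is exactly the low-pass condition of Definition~\ref{def:lpf}.

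The main obstacle is that $\mathbf{A}$ is in general neither symmetric nor normal, so its eigenvectors are not orthogonal and do not align with the orthogonal LFC/HFC splitting induced by the Fourier basis; one cannot simply read frequency content off the eigen-decomposition. The careful part of the argument is therefore to bound $\text{HFC}[\mathbf{A}^t\bm{r}]$ and the deviation of $\text{LFC}[\mathbf{A}^t\bm{r}]$ using only the spectral gap together with the orthonormality of $\{\bm{f}_j\}$, rather than any alignment between eigenvectors and Fourier modes: it suffices that $\|\mathbf{A}^t\bm{r}\|_2\to 0$, since both $\|\text{HFC}[\,\cdot\,]\|_2$ and the gap between $\|\text{LFC}[\,\cdot\,]\|_2$ and $|c_1|\,\|\bm{1}\|_2$ are controlled by $\|\mathbf{A}^t\bm{r}\|_2$. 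A secondary point to settle is the degenerate case $c_1=0$; I would handle it by normalising $\mathbf{A}^t\bm{x}$ and reasoning through the rank-one limit operator $\bm{1}\bm{w}^{\mathtt{T}}$, where $\bm{w}$ is the stationary (left Perron) vector, so that the image of the iterated map collapses onto $\mathrm{span}\{\bm{1}\}$ and the high-frequency energy vanishes regardless of the initial direction.
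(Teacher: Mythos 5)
Your proposal takes the same route as the paper's own proof: both arguments rest on the Perron--Frobenius theorem for the positive row-stochastic matrix $\mathbf{A}$ and on the decay, under iteration, of every spectral component other than the dominant one. The difference is that you make explicit the one step on which the whole theorem actually turns and which the paper leaves unjustified: the right Perron eigenvector of a row-stochastic matrix is $\bm{1}=\sqrt{N}\,\bm{f}_1$, i.e.\ exactly the lowest Fourier mode, so $\text{HFC}[\bm{1}]=\bm{0}$ and the limit $\mathbf{A}^t\bm{x}\to(\bm{w}^{\mathtt{T}}\bm{x})\bm{1}$ is a purely low-frequency signal; the paper merely asserts that ``high-frequency components attenuate faster than the primary low-frequency component'' without ever connecting the dominant eigendirection to the Fourier basis. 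Your handling of non-normality is also the right one --- since $\text{LFC}$ and $\text{HFC}$ are bounded linear maps, $\|\mathbf{A}^t\bm{r}\|_2\to 0$ is all that is needed, with no alignment between eigenvectors and Fourier modes. One caution on your degenerate case: when $c_1=\bm{w}^{\mathtt{T}}\bm{x}=0$ the normalized iterates collapse onto the \emph{subdominant} eigendirection, not onto $\mathrm{span}\{\bm{1}\}$, so the ratio need not tend to $0$ there; the ``for all $\bm{x}$'' in the statement really only holds off that measure-zero hyperplane. The paper quietly sidesteps this by changing the ratio in its final display (subtracting $\lambda_1^t\mathbf{v}_1$ in the numerator and keeping only $\lambda_1^t\mathbf{v}_1$ in the denominator), so your version is, if anything, the more honest account.
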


\begin{proof}
Given the matrix $\mathbf{A}$  defined by the softmax function, it has non-negative entries, and the sum of each row is unity. We aim to describe the evolution of the $\bm{x}$ under repeated application of $\mathbf{A}$.

Let's denote the Jordan Canonical Form of $\mathbf{A}$ as $\mathbf{J}$, with the similarity transformation represented by the matrix $\mathbf{P}$, such that:
\begin{align}
    \mathbf{A}=\mathbf{P}\mathbf{J}\mathbf{P}^{-1},
\end{align}where $\mathbf{J}$ is block-diagonal, with each block corresponding to an eigenvalue and its associated Jordan chains. The largest eigenvalue, by the Perron-Frobenius theorem, is real, non-negative, and dominant. Let's denote this eigenvalue by $\lambda_1$.

Now, consider the repeated application of $\mathbf{A}$:
\begin{align}
    f^t(\bm{x}) = \mathbf{A}^t\bm{x}= (\mathbf{P}\mathbf{J}\mathbf{P}^{-1})^t\bm{x}.
\end{align}
Expanding using the binomial theorem and taking into account the structure of the Jordan blocks, we can see that the dominant behavior for large $t$ will be $\lambda^t_1$. Other terms involving smaller eigenvalues or higher powers of $t$ in the Jordan blocks will become negligible over time, compared to the term with $\lambda^t_1$.

Expressing the transformation in the frequency domain, high-frequency components attenuate faster than the primary low-frequency component. This is due to the term $\lambda^t_1$ becoming overwhelmingly dominant as $t$ grows, causing other components to diminish in comparison.

Thus, in the context of our filter definitions, it becomes evident that:
\begin{align}
\lim_{t\rightarrow \infty} \frac{||HFC[f^t(\bm{x})-\lambda^t_1\mathbf{v}_1]||_2} {||LFC[\lambda^t_1\mathbf{v}_1]||_2}=0
\end{align}
Here, $\mathbf{v}_1$ is the generalized eigenvector corresponding to $\lambda_1$.

This behavior is emblematic of a low-pass filter, reaffirming the low-pass nature of $\mathbf{A}$. Importantly, this is irrespective of the specific configurations of the input matrices $\mathbf{Q}$ and $\mathbf{K}$.
\end{proof}

\section{Additional Details for Experiments}
\subsection{Details of Datasets}\label{app:data}
We provide 6 benchmark datasets used for our experiments. Table~\ref{tab:data} summarizes the statistical information of the processed datasets. 

\begin{table}[h]
    % \small
    \centering
    \setlength{\tabcolsep}{2pt}
    \resizebox{0.95\columnwidth}{!}{%
    \begin{tabular}{l ccccc}\toprule
        & \# Users & \# Items & \# Interactions & Avg. Length & Sparsity \\ \midrule
        Beauty      & 22,363   & 12,101   & 198,502       & 8.9        & 99.93\%  \\
        Sports      & 25,598   & 18,357   & 296,337       & 8.3        & 99.95\%  \\
        Toys        & 19,412   & 11,924   & 167,597       & 8.6        & 99.93\%  \\
        Yelp        & 30,431   & 20,033   & 316,354       & 10.4       & 99.95\%  \\
        LastFM      & 1,090    & 3,646    & 52,551        & 48.2       & 98.68\%  \\
        ML-1M       & 6,041    & 3,417    & 999,611       & 165.5      & 95.16\%  \\
        \bottomrule
    \end{tabular}
    }
    \caption{Statistics of the processed datasets}
    \label{tab:data}
\end{table}

\begin{itemize}
    \item Amazon Beauty, Sports, Toys are three sub-categories of Amazon dataset~\cite{mcauley2015image} which contains series of product reviews crawled from the Amazon.com. These datasets are known for its high sparsity and short sequence lengths, and widely used for SR. %~\cite{du2023fearec, qiu2022duorec, zhou2022fmlprec, zhou2020s3, sun2019bert4rec}
    \item Yelp\footnote{https://www.yelp.com/dataset} is a popular business recommendation dataset. We only treat the transaction records after January 1st, 2019 since it is very large.
    \item ML-1M~\cite{harper2015movielens} is the popular movie recommendation dataset provided by MovieLens\footnote{https://grouplens.org/datasets/movielens/}. It has the longest average interaction length among our datasets.
    \item LastFM\footnote{https://grouplens.org/datasets/hetrec-2011/} contains user interaction with music, such as artist listening records. It is used to recommend musicians to users in SR with long sequence lengths.
\end{itemize}

\subsection{Details of Baselines}\label{app:baselines}
We compare our method with well-known SR baselines with 3 categories. The detailed information of baselines are follows:
\begin{itemize}
    \item RNN or CNN-based sequential models: GRU4Rec~\cite{hidasi2016gru4rec} is a model that incorporate GRU for SR. Caser~\cite{tang2018caser} is a CNN-based method capturing high-order patterns by applying horizontal and vertical convolutional operations for SR.
    \item Transformer-based sequential models: SASRec~\cite{kang2018sasrec} is the first sequential recommender based on the self-attention. It is a popular baseline in SR. BERT4Rec~\cite{sun2019bert4rec} uses a masked item training scheme similar to the masked language model sequential in NLP. The backbone is the bi-directional self-attention mechanism. FMLPRec~\cite{zhou2022fmlprec} is an all-MLP model with learnable filters for SR.
    \item Transformer-based sequential models with contrsastive learning: DuoRec~\cite{qiu2022duorec} uses unsupervised model-level augmentation and supervised semantic positive samples for contrastive learning. FEARec~\cite{du2023fearec} is a SR model based on contrastive learning using time domain attention and autocorrelation.
\end{itemize}

\subsection{Experimental Settings \& Hyperparameters}\label{app:best}
The following software and hardware environments were used for all experiments: \textsc{Ubuntu} 18.04 LTS, \textsc{Python} 3.9.7,  \textsc{PyTorch} 1.8.1, \textsc{Numpy} 1.24.3, \textsc{Scipy} 1.11.1, \textsc{CUDA} 11.1, and \textsc{NVIDIA} Driver 465.19, and i9 CPU, and \textsc{NVIDIA RTX 3090}. 

For reproducibility, we introduce the best hyperparameter configurations for each dataset in Table~\ref{tab:best}. We conducted experiments under the following hyperparameters: the $\alpha$ is in $\{0.1, 0.3, 0.5, 0.7, 0.9\}$, $c$ is in $\{1, 3, 5, 7, 9\}$, and the number of heads in Transformer $h$ is chosen from $\{1, 2, 4\}$. For training, the Adam optimizer is optimized with learning rate in \{\num{5e-4}, \num{1e-3}\}.

\begin{table}[h]
    \centering
    % \small
    \setlength{\tabcolsep}{2pt}
    \resizebox{0.95\columnwidth}{!}{%
    \begin{tabular}{l cccccc}\toprule
                & Beauty & Sports & Toys & Yelp & LastFM & ML-1M \\ \midrule
    $\alpha$    & 0.7 & 0.3 & 0.7 & 0.7 & 0.9 & 0.3 \\
    $c$         & 5 & 5 & 3 & 3 & 3 & 9 \\
    $h$         & 1 & 4 & 1 & 4 & 1 & 4 \\
    lr          & \num{5e-4} & \num{1e-3} & \num{1e-3} & \num{1e-3} & \num{1e-3} & \num{5e-4} \\
        \bottomrule
    \end{tabular}
    }
    \caption{Best hyperparameters of BSARec on all datasets}
    \label{tab:best}
\end{table}

\section{Ablation Studies}
% As ablation study models, we define the following models: i) the first ablation model has only the self-attention term i.e., $\mathbf{A}$, ii) the second ablation model has only the attentive inductive bias term, i.e., $\mathbf{A}_{\textrm{IB}}$ in Eq.~\ref{eq:bsa}, and iii) the third ablation model uses $\beta$ as a single parameter. For Beauty and Toys, the ablation study model with only $\mathbf{A}_{\textrm{IB}}$ outperformed the case with only $\mathbf{A}$ (e.g., HR@20 in Beauty by $\mathbf{A}_{\textrm{IB}}$ of 0.1338 versus 0.1265 by $\mathbf{A}$). However, BSARec, which utilizes them all, outperforms them. This demonstrates that both are required to achieve the best model accuracy.
Table~\ref{tab:abl_full} shows the results of the ablation study on all datasets. For all datasets, BSARec, which utilizes $\mathbf{A}$ and $\mathbf{A}_{\textrm{IB}}$, outperforms ``Only $\mathbf{A}$'' and ``Only $\mathbf{A}_{\textrm{IB}}$''. This results show that both are required to achieve the best accuracy. We also conduct an ablation study on $\bm{\beta}$. We check the effect of the learnable vector $\bm{\beta}$ and the learnable scalar parameter $\beta$. Rescaling high frequencies by the learnable vector $\beta$ has better recommendation performance than using the scalar $\beta$.

\begin{table*}[h]
    % \small
    \centering
    \setlength{\tabcolsep}{1.5pt}
    \resizebox{1.02\textwidth}{!}{%
    \begin{tabular}{l cccccccccccccccccc}
    \toprule
    \multirow{2}{*}{Methods}    && \multicolumn{2}{c}{Beauty} && \multicolumn{2}{c}{Sports} && \multicolumn{2}{c}{Toys} && \multicolumn{2}{c}{Yelp} && \multicolumn{2}{c}{LastFM} && \multicolumn{2}{c}{ML-1M}  \\ 
    \cmidrule(lr){3-4} \cmidrule(lr){6-7} \cmidrule(lr){9-10} \cmidrule(lr){12-13} \cmidrule(lr){15-16} \cmidrule(lr){18-19}
                                && HR@20 & NDCG@20  &&  HR@20 & NDCG@20     && HR@20 & NDCG@20      &&  HR@20 & NDCG@20     && HR@20 & NDCG@20      &&  HR@20 & NDCG@20 \\ 
    \midrule
    BSARec                      && \textbf{0.1373} & \textbf{0.0703}  && \textbf{0.0858} & \textbf{0.0422}      && \textbf{0.1435} & \textbf{0.0768}      && \textbf{0.0746} & \textbf{0.0302}      && \textbf{0.1174} & \textbf{0.0526}      && \textbf{0.3884} & \textbf{0.1851} \\
    \midrule
    Only $\mathbf{A}$           && 0.1265 & 0.0657  && 0.0779 & 0.0382      && 0.1320 & 0.0720      && 0.0618 & 0.0248      && 0.0899 & 0.0430      && 0.3826 & 0.1846\\
    Only $\mathbf{A}_{\textrm{IB}}$      && 0.1338 & 0.0677  && 0.0857 & 0.0416      && 0.1402 & 0.0744      && 0.0705 & 0.0287      && 0.1009 & 0.0455      && 0.3780 & 0.1807\\
    \midrule
    Scalar $\beta$              && 0.1333 & 0.0685  && 0.0838 & 0.0405      && \textbf{0.1435} & 0.0756      && 0.0707 & 0.0291      && 0.1092 & 0.0497      && 0.3762 & 0.1794\\
    \bottomrule
    \end{tabular}
    }
    \caption{Ablation on all datasets}
    \label{tab:abl_full}
\end{table*}

\section{Sensitivity Analyses}\label{app:sens}
Fig.~\ref{fig:sen_alpha_appendix} shows HR@20 and NDCG@20 by varying $\alpha$ in all datasets. For Beauty, Toys, Yelp and LastFM, a larger value of $\alpha$ is preferred. For Sports and ML-1M, the best accuracy is achieved when $\alpha$. The balance between the self-attention matrix and the inductive bias varies across datasets to these results.

Fig.~\ref{fig:sen_c_appendix} shows HR@20 and NDCG@20 by varying the $c$ in all datasets. For Beauty and Sports, the best accuracy is achieved when $c$ is 5. For Toys, Yelp and LastFM, with $c$ = 3, we can achieve the best accuracy. For ML-1M, a larger value of $c$ is preferred. The number of lowest frequency components varies across datasets according to these results.

\begin{figure}[t]
    \begin{subfigure}[Beauty]{\includegraphics[width=.49\columnwidth]{images/sensitivity_alpha_Beauty.pdf}}
    \end{subfigure}
    \hspace{-1em}
    \begin{subfigure}[Sports]{\includegraphics[width=.49\columnwidth]{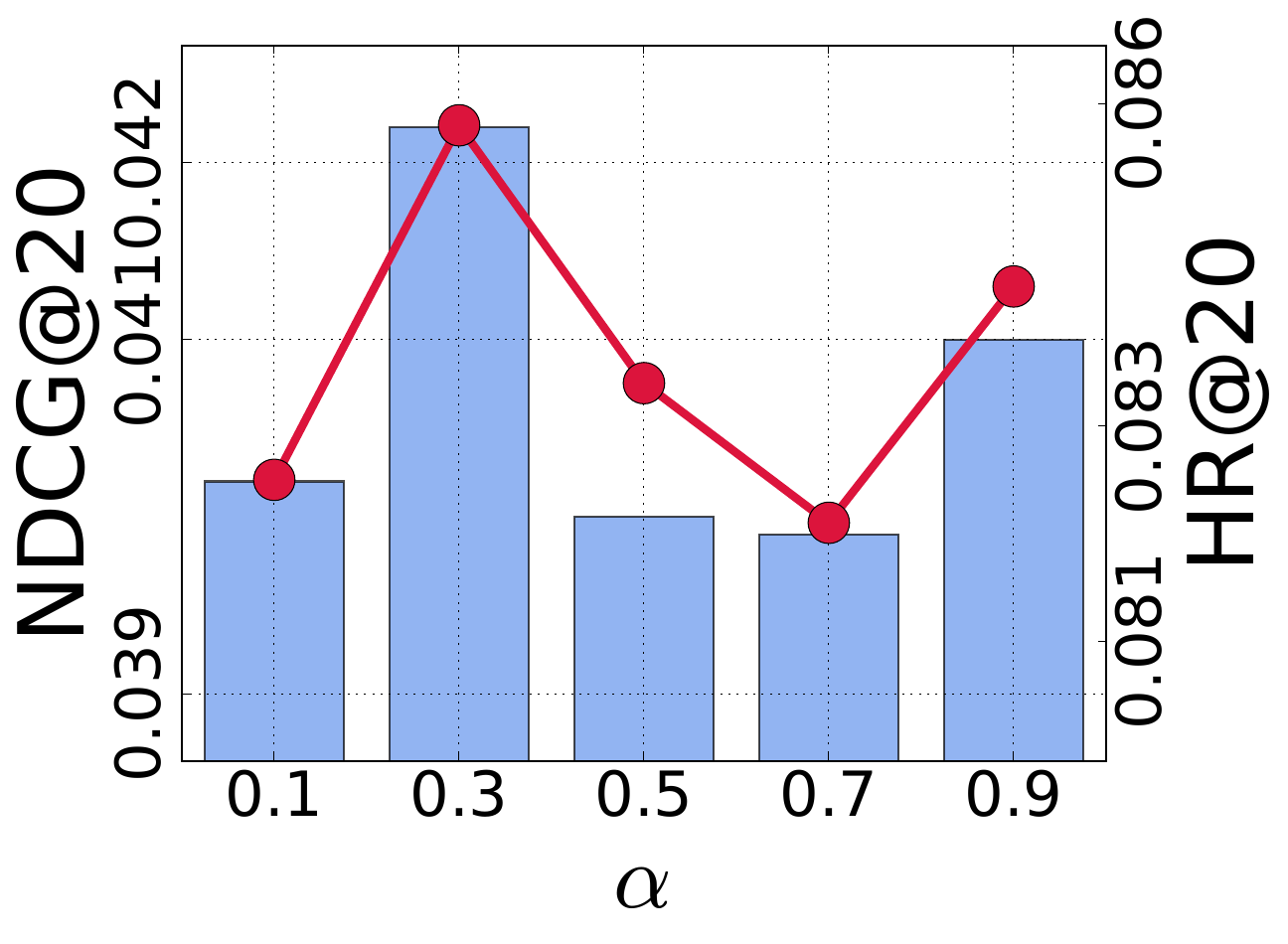}}
    \end{subfigure}
    \begin{subfigure}[Toys]{\includegraphics[width=.49\columnwidth]{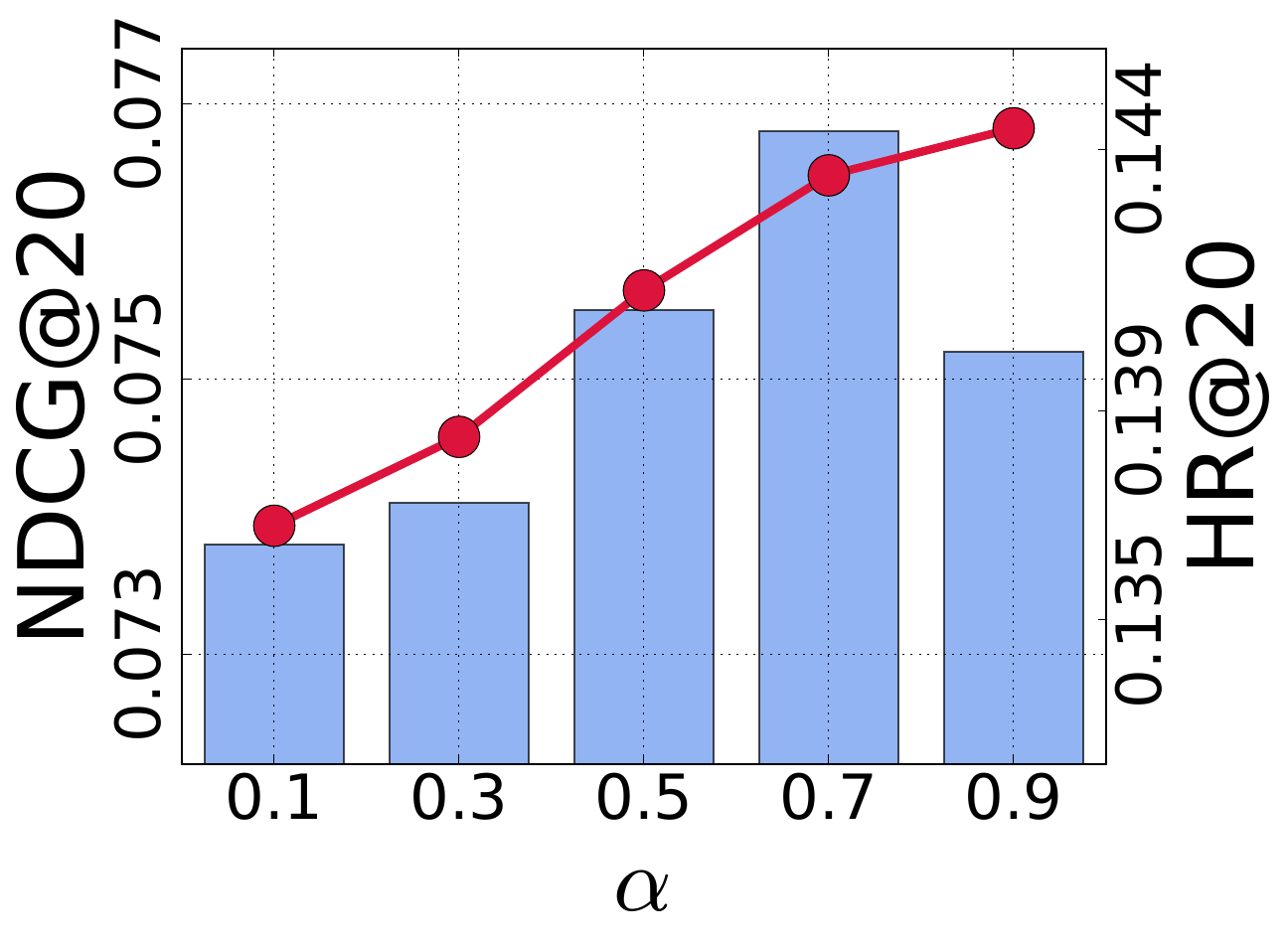}}
    \end{subfigure}
    \hspace{-1em}
    \begin{subfigure}[Yelp]{\includegraphics[width=.49\columnwidth]{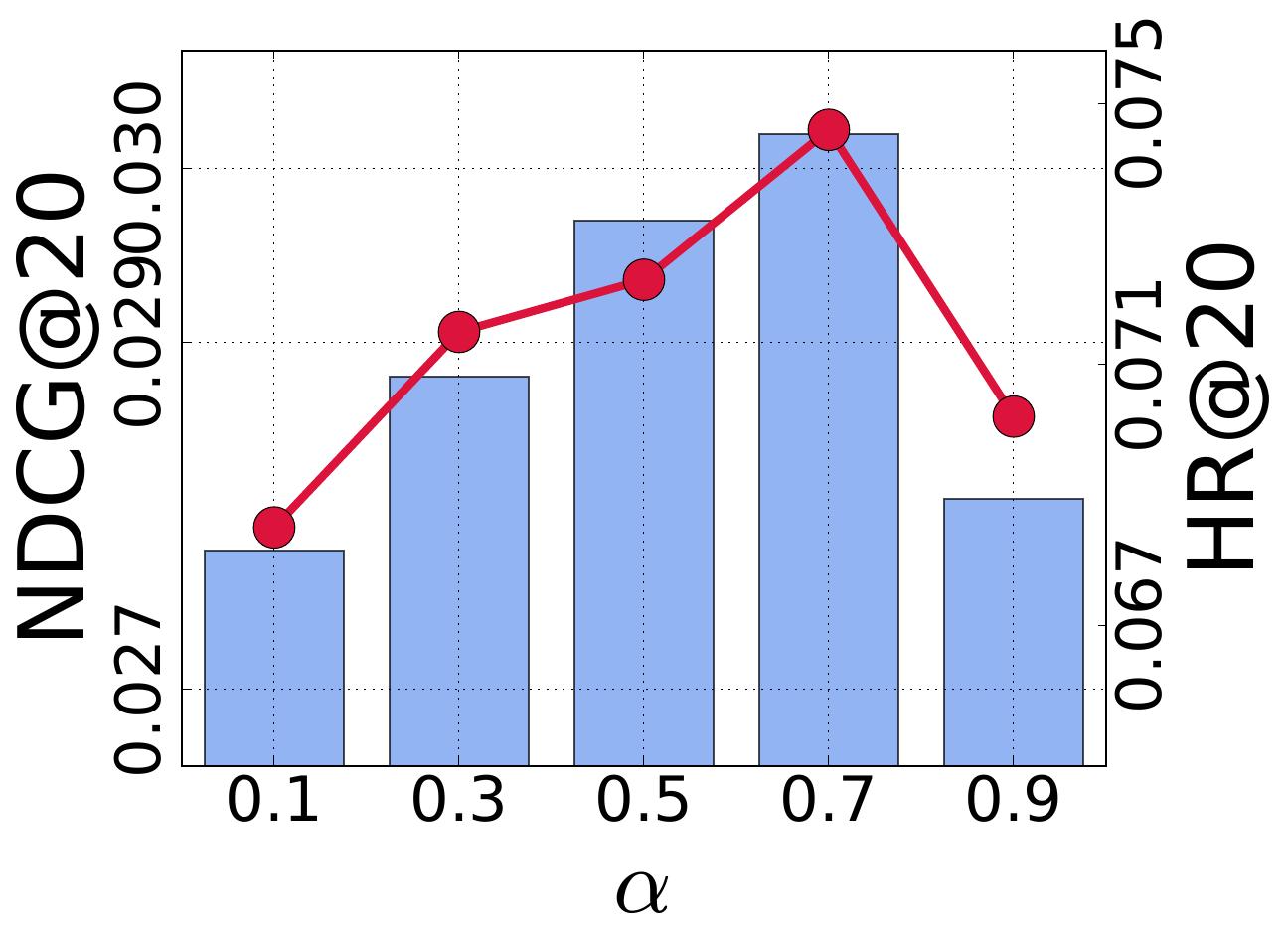}}
    \end{subfigure}
    \begin{subfigure}[LastFM]{\includegraphics[width=.49\columnwidth]{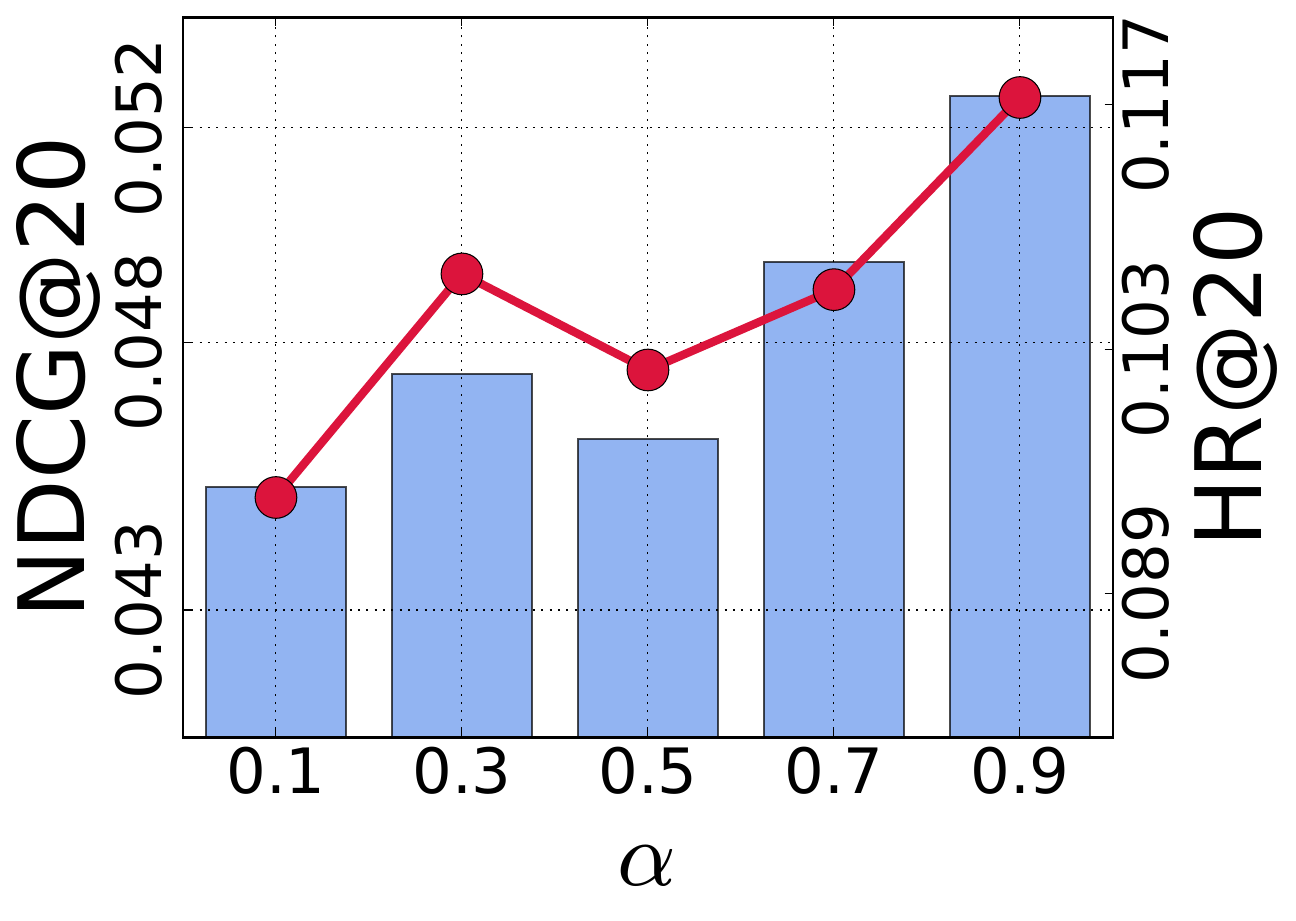}}
    \end{subfigure}
    \hspace{-1em}
    \begin{subfigure}[ML-1M]{\includegraphics[width=.49\columnwidth]{images/sensitivity_alpha_ML-1M.pdf}}
    \end{subfigure}
    \caption{Sensitivity to $\alpha$ on all datasets}
    \label{fig:sen_alpha_appendix}
\end{figure}

\begin{figure}[t]
    \begin{subfigure}[Beauty]{\includegraphics[width=.49\columnwidth]{images/sensitivity_c_Beauty.pdf}}
    \end{subfigure}
    \hspace{-1em}
    \begin{subfigure}[Sports]{\includegraphics[width=.49\columnwidth]{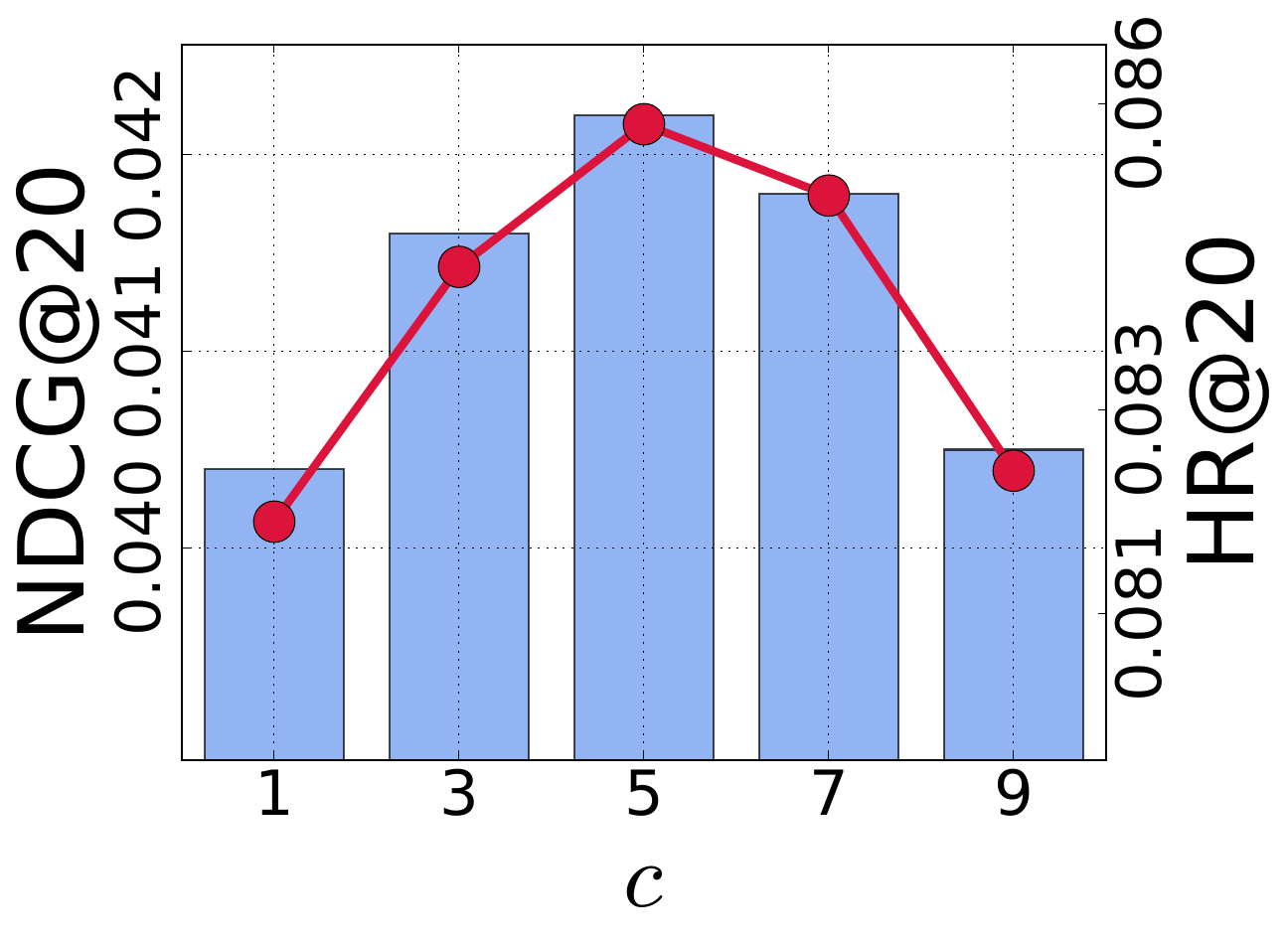}}
    \end{subfigure}
    \begin{subfigure}[Toys]{\includegraphics[width=.49\columnwidth]{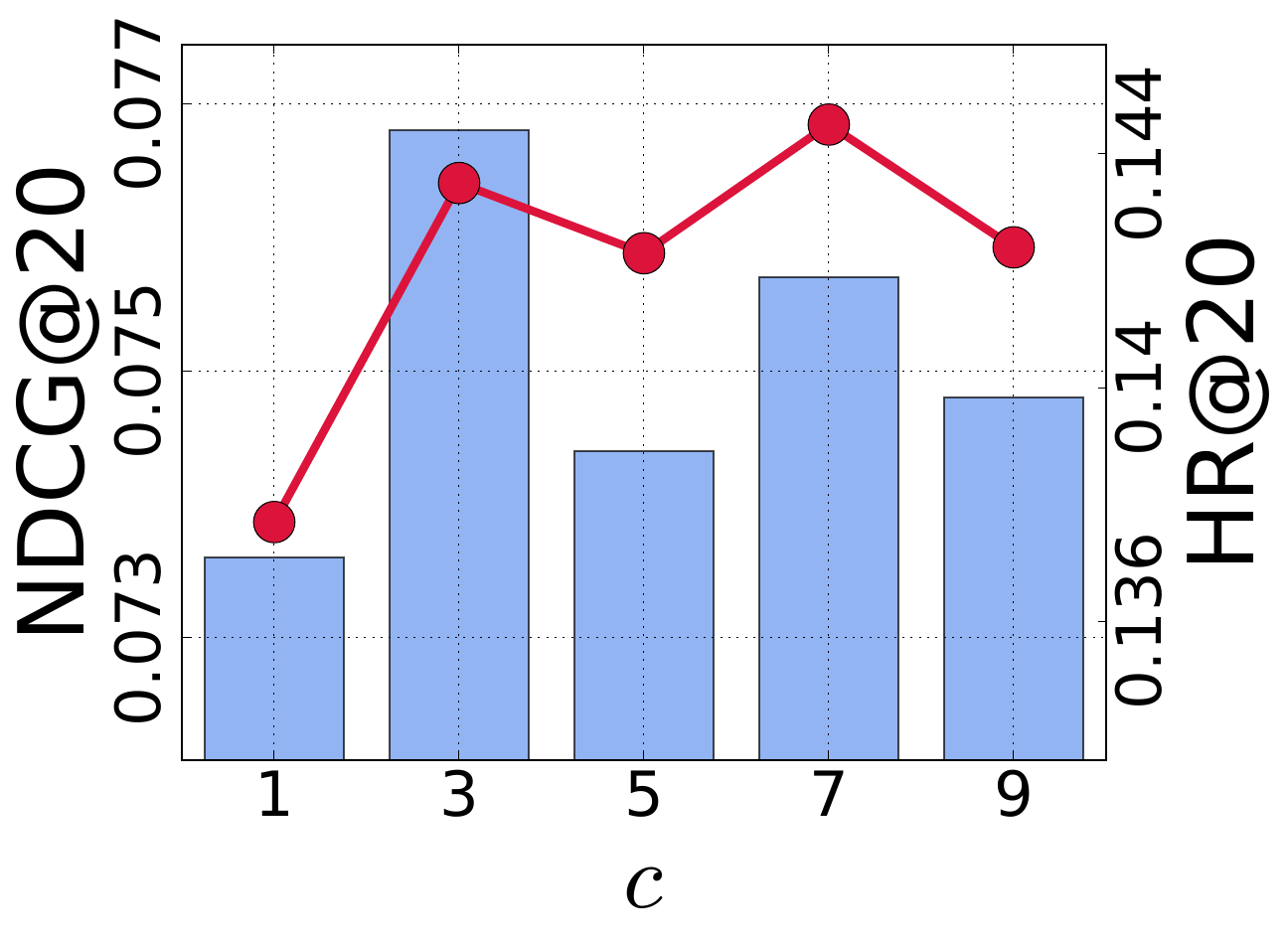}}
    \end{subfigure}
    \hspace{-1em}
    \begin{subfigure}[Yelp]{\includegraphics[width=.49\columnwidth]{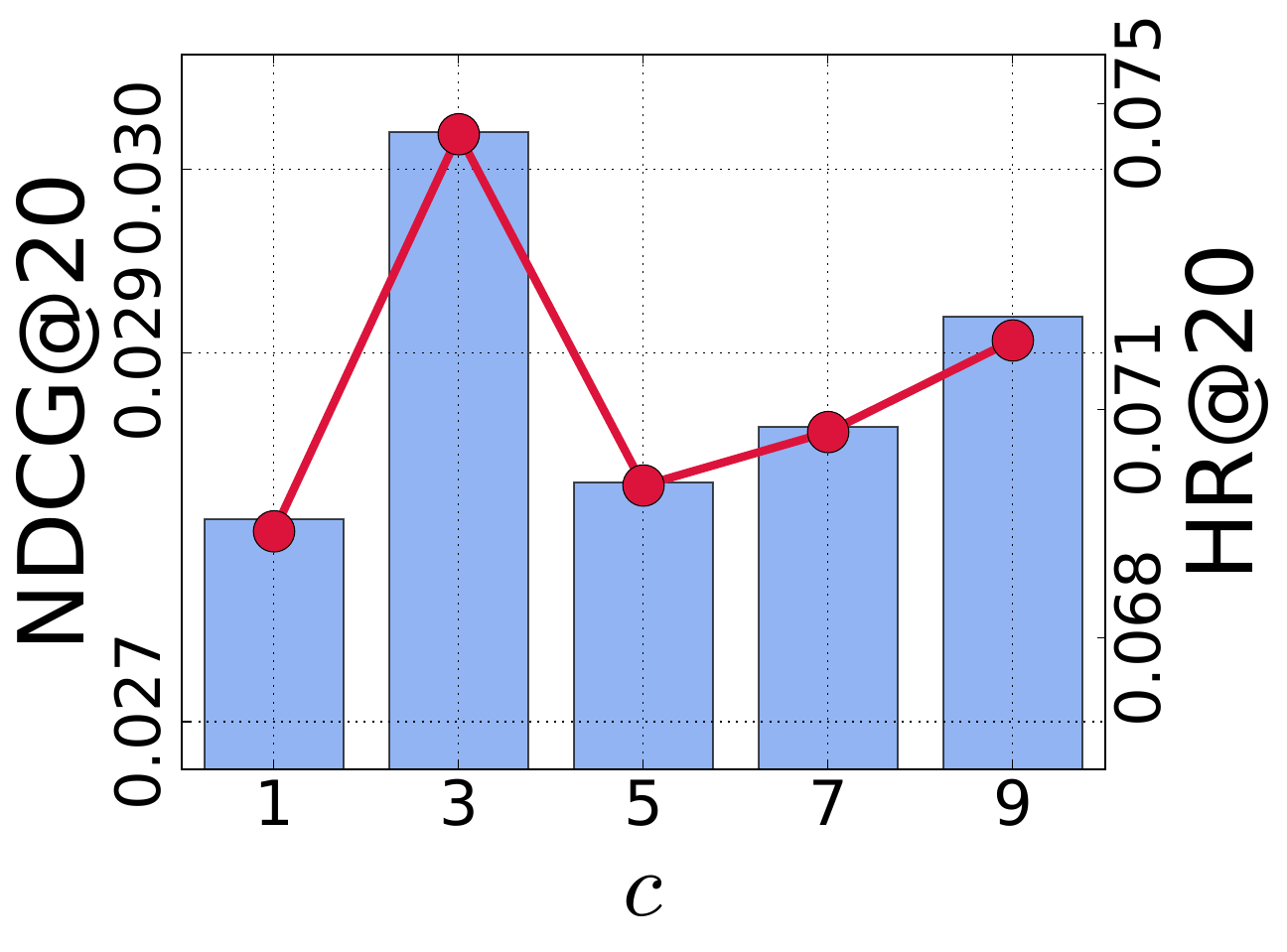}}
    \end{subfigure}
    \begin{subfigure}[LastFM]{\includegraphics[width=.49\columnwidth]{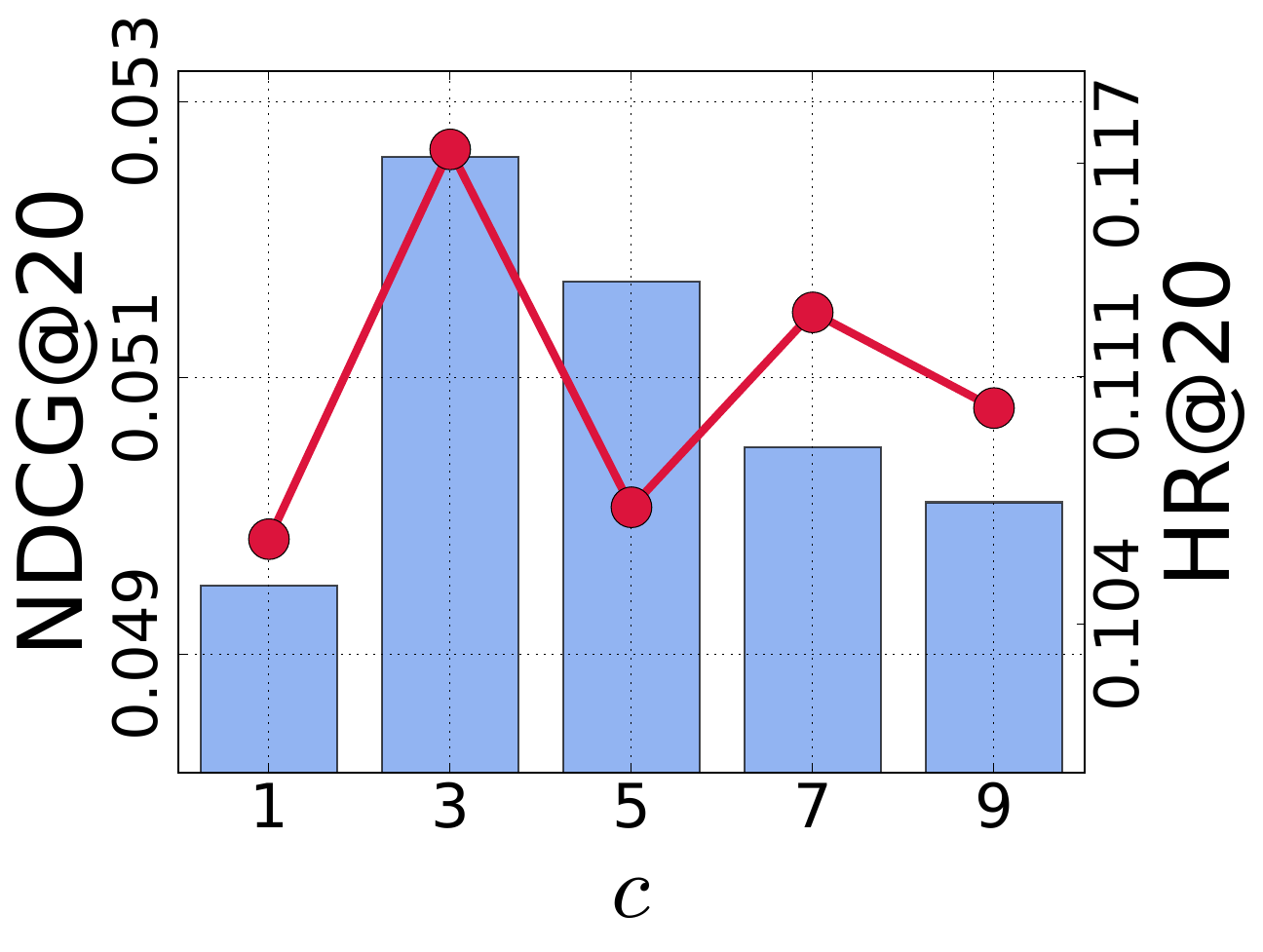}}
    \end{subfigure}
    \hspace{-1em}
    \begin{subfigure}[ML-1M]{\includegraphics[width=.49\columnwidth]{images/sensitivity_c_ML-1M.pdf}}
    \end{subfigure}
    \caption{Sensitivity to $c$ on all datasets}
    \label{fig:sen_c_appendix}
\end{figure}

\section{Model Complexity and Runtime Analyses}\label{app:runtime}
To evaluate the complexity and efficiency of BSARec, we evaluate the number of parameters and runtime per epoch. Results for all datasets are shown in Table~\ref{tab:runtime_appendix}. Overall, BSARec slightly increases the total parameters. However, if we use $\bm{\beta}$ as one scalar parameter $\beta$ in our model, the number of parameters may not make much difference. It only increases by 258 parameters compared to SASRec, DuoRec and FEARec. We show that BSARec has faster runtime times per epoch than FEARec and DuoRec across all datasets.

\begin{table*}[t]
    \small
    \centering
    \setlength{\tabcolsep}{1.5pt}
    \resizebox{1.02\textwidth}{!}{%
    \begin{tabular}{l cccccccccccccccccc}
    \toprule
    \multirow{2}{*}{Methods}    && \multicolumn{2}{c}{Beauty} && \multicolumn{2}{c}{Sports} && \multicolumn{2}{c}{Toys} && \multicolumn{2}{c}{Yelp} && \multicolumn{2}{c}{LastFM} && \multicolumn{2}{c}{ML-1M}  \\ 
    \cmidrule(lr){3-4} \cmidrule(lr){6-7} \cmidrule(lr){9-10} \cmidrule(lr){12-13} \cmidrule(lr){15-16} \cmidrule(lr){18-19}
                                && \# params & s/epoch  &&  \# params & s/epoch     && \# params & s/epoch      &&  \# params & s/epoch     && \# params & s/epoch      &&  \# params & s/epoch \\ 
    \midrule
    BSARec      && 878,208 & 12.75   && 1,278,592 & 18.58      && 866,880 & 11.63      && 1,385,856 & 21.20      && 337,088 & 3.11      && 322,368 & 20.73 \\
    \midrule
    SASRec      && 877,824 & 10.41   && 1,278,208 & 15.32      && 866,496 & 9.96      && 1,385,472 & 18.25      && 336,704 & 2.80      && 321,984 & 19.37 \\
    DuoRec      && 877,824 & 19.26   && 1,278,208 & 27.99      && 866,496 & 18.79     && 1,385,472 & 31.08      && 336,704 & 4.24     && 321,984 & 32.33 \\
    FEARec      && 877,824 & 156.83  && 1,278,208 & 233.42     && 866,496 & 132.43    && 1,385,472 & 257.56     && 336,704 & 27.82    && 321,984 & 278.24 \\
    \bottomrule
    \end{tabular}
    }
    \caption{The number of parameters and training time (runtime per epoch) on all datasets}
    \label{tab:runtime_appendix}
\end{table*}

\clearpage

\section{Performance under Different Setting}
In our main text, we use the evaluation strategy of DuoRec and FEARec, which employ all the items in the dataset. To further verify the effectiveness of our BSARec, we also conduct experiments under another strategy that SASRec and FMLP-Rec used. For evaluation, SASRec and FMLPRec pair the ground-truth item with 99 randomly sampled negative items that the user has not interacted with. 

We select Transformer-based SASRec as the representative baseline model and FMLP-Rec as one of the SOTA baselines. Table~\ref{tab:100sample} shows the results on Beauty, Sports, Toys, Yelp, LastFM, and ML-1M datasets. From the table, we can see similar tendencies as in Table~\ref{tab:main_exp}. BSARec performs much better than the other two methods. In most of the cases, BSARec outperforms SASRec and FMLP-Rec. These findings indicate that our beyond self-attention layer and attentive inductive bias with frequency rescaler are exactly effective for the SR task.
\newpage
\begin{table}[h]
    \centering
    \small
    \begin{tabular}{ll ccc}
    \toprule
    Dataset & Metric & SASRec & FMLPRec & BSARec \\
    \midrule
    \multirow{5}{*}{Beauty} & HR@5      & 0.3512 & 0.3922 & \textbf{0.4312} \\
                            & HR@10     & 0.4434 & 0.4914 & \textbf{0.5225} \\
                            & NDCG@5    & 0.2628 & 0.2964 & \textbf{0.3379} \\
                            & NDCG@10   & 0.2926 & 0.3284 & \textbf{0.3673} \\
                            & MRR       & 0.2637 & 0.2949 & \textbf{0.3350} \\
    \midrule
    \multirow{5}{*}{Sports} & HR@5      & 0.3480 & 0.3781 & \textbf{0.4133} \\
                            & HR@10     & 0.4717 & 0.4997 & \textbf{0.5303} \\
                            & NDCG@5    & 0.2492 & 0.2739 & \textbf{0.3102} \\
                            & NDCG@10   & 0.2891 & 0.3131 & \textbf{0.3479} \\
                            & MRR       & 0.2520 & 0.2742 & \textbf{0.3089} \\
    \midrule
    \multirow{5}{*}{Toys}   & HR@5      & 0.3594 & 0.3867 & \textbf{0.4224} \\
                            & HR@10     & 0.4566 & 0.4852 & \textbf{0.5180} \\
                            & NDCG@5    & 0.2726 & 0.2926 & \textbf{0.3351} \\
                            & NDCG@10   & 0.3040 & 0.3244 & \textbf{0.3659} \\
                            & MRR       & 0.2746 & 0.2917 & \textbf{0.3349} \\
    \midrule
    \multirow{5}{*}{Yelp}   & HR@5      & 0.5553 & 0.6058 & \textbf{0.6447} \\
                            & HR@10     & 0.7406 & 0.7707 & \textbf{0.7848} \\
                            & NDCG@5    & 0.3902 & 0.4337 & \textbf{0.4824} \\
                            & NDCG@10   & 0.4504 & 0.4873 & \textbf{0.5280} \\
                            & MRR       & 0.3748 & 0.4114 & \textbf{0.4587} \\
    \midrule
    \multirow{5}{*}{LastFM} & HR@5      & 0.2716 & 0.2853 & \textbf{0.3752} \\
                            & HR@10     & 0.3972 & 0.4138 & \textbf{0.5028} \\
                            & NDCG@5    & 0.1871 & 0.1975 & \textbf{0.2634} \\
                            & NDCG@10   & 0.2276 & 0.2394 & \textbf{0.3045} \\
                            & MRR       & 0.1976 & 0.2081 & \textbf{0.2636} \\
    \midrule
    \multirow{5}{*}{ML-1M}  & HR@5      & 0.6874 & 0.6763 & \textbf{0.7023} \\
                            & HR@10     & 0.7904 & 0.7858 & \textbf{0.7978} \\
                            & NDCG@5    & 0.5308 & 0.5212 & \textbf{0.5646} \\
                            & NDCG@10   & 0.5642 & 0.5568 & \textbf{0.5955} \\
                            & MRR       & 0.5020 & 0.4941 & \textbf{0.5406} \\
        \bottomrule
    \end{tabular}
    \vspace{-0.5em}
    \caption{Performance comparison on 99 negative sampling}
    \vspace{-1.5em}
    \label{tab:100sample}
\end{table}
\end{document}